\def\eqref#1{equation~\ref{#1}}
\def\1{\bm{1}}
\def\vh{{\bm{h}}}
\def\vu{{\bm{u}}}
\def\mB{{\bm{B}}}
\def\mE{{\bm{E}}}
\def\mG{{\bm{G}}}
\def\mJ{{\bm{J}}}
\def\mU{{\bm{U}}}
\def\mV{{\bm{V}}}
\def\mX{{\bm{X}}}
\def\mY{{\bm{Y}}}
\DeclareMathAlphabet{\mathsfit}{\encodingdefault}{\sfdefault}{m}{sl}
\SetMathAlphabet{\mathsfit}{bold}{\encodingdefault}{\sfdefault}{bx}{n}
\def\Pa{{\text{\textbf{Pa}}}}
\newcommand{\E}{\mathbb{E}}
\newcommand{\R}{\mathbb{R}}
\newcommand{\KL}{D_{\mathrm{KL}}}
\newif\ifshowcomments
\theoremstyle{definition}
\newtheorem{definition}{Definition}[section]
\newtheorem{theorem}{Theorem}
\newtheorem{corollary}{Corollary}[theorem]
\theoremstyle{definition}
\newtheorem{assumption}{Assumption}
\newcommand{\ModelName}{Rhino}
\newcommand{\multitimeX}[1]{\mX_{0:T}^{(#1)}}
\newcommand{\singletimeX}[1][0]{\mX_{#1:T}}
\newcommand{\singletimeY}[1][0]{\mY_{#1:T}}
\newcommand{\singletimeXsource}[1][0]{\mX_{#1:\sourcelength}}
\newcommand{\singletimeYsource}[1][0]{\mY_{#1:\sourcelength}}
\newcommand{\singletimeXfull}[2]{\mX_{#1:#2}}
\newcommand{\singletimeYfull}[2]{\mY_{#1:#2}}
\newcommand{\Graph}[1][0]{\mG_{#1:K}}
\newcommand{\singlegraph}{\mG_{0}}
\DeclareMathOperator{\tr}{tr}
\DeclareMathOperator{\CATE}{CATE}
\DeclareMathOperator{\doop}{do}
\def\vardist{{q_\phi(\mG)}}
\def\sourcelength{{\mathcal{S}}}
\def\sourcenodes{{\mX_{0:\sourcelength}}}
\newcommand{\PaGst}[1][i]{\Pa_{G}^{#1}(<t)}
\newcommand{\PaGt}[1][i]{\Pa_{G}^{#1}(t)}
 \newcommand{\ind}{\perp\!\!\!\!\perp} 
\newcommand{\PaGstXY}[1][X,Y]{\Pa_{G}^{#1}(<t)}
\newcommand{\bPaGstXY}[1][X,Y]{\overline{\Pa}_{G'}^{#1}(<t)}
\def\historyGX{{\vh_G^X}}
\def\historyGY{{\vh_G^Y}}
\def\bhistoryGX{{\bar{\vh}_{G'}^X}}
\def\bhistoryGY{{\bar{\vh}_{G'}^Y}}
\def\bnu{{\bar{\nu}}}
\def\alphat{{\alpha_t}}
\def\betat{{\beta_t}}
\newenvironment{sproof}{%
  \proof}{\endproof}
\title{Rhino: Deep Causal Temporal Relationship Learning with history-dependent noise}
\author{Wenbo Gong, Joel Jennings, Cheng Zhang \& Nick Pawlowski\\
Microsoft Research\\
Cambridge, UK\\
\texttt{\{t-gongwenbo, joeljennings, cheng.zhang, nick.pawlowski\}}\\
\texttt{@microsoft.com}
}
\begin{document}

\maketitle

\begin{abstract}
Discovering causal relationships between different variables from time series data has been a long-standing challenge for many domains {such as climate science, finance and healthcare.}
{Given the the complexity of real-world relationships and the nature of observations in discrete time,} {causal discovery methods} need to consider non-linear relations between variables, instantaneous effects and history dependent noise (the change of noise distribution due to past actions).
However, previous works do not offer a solution addressing all these problems together. 
In this paper, we propose a 
{novel causal relationship learning framework for time-series data}, called \ModelName{}, which combines vector auto-regression, deep learning and variational inference to model non-linear relationships with instantaneous effects while allowing the noise distribution to be modulated by historical observations. Theoretically, we prove the structural identifiability of \ModelName{}. Our empirical results from extensive synthetic experiments and two real-world benchmarks demonstrate better discovery performance compared to relevant baselines, with ablation studies revealing its robustness under model misspecification.
\end{abstract}


\section{Introduction}
\label{sec: introduction}
Time series data is a collection of data points recorded at different timestamps describing a pattern of chronological change. {Identifying the causal relations between different variables and their interactions through time \citep{spirtes2000causation, berzuini2012causality, guo2020survey, peters2017elements} is essential for many applications e.g.~climate science, health care, etc. } 
Randomized control trials are the gold standard for discovering such relationships, but may be unavailable due to cost and ethical constraints. Therefore, causal discovery with just observational data is important and fundamental to many real-world applications \citep{lowe2022amortized, bussmann2021neural,moraffah2021causal,wu2020discovering, runge2018causal,tank2018neural, hyvarinen2010estimation, pamfil2020dynotears}.

The task of temporal causal discovery can be challenging for several reasons: (1) relations between variables can be non-linear in the real world; (2) with a slow sampling interval, everything happens in between will be aggregated into the same timestamp, i.e.~instantaneous effect; (3) the noise may be non-stationary (its distribution depends on the past observations), i.e.~history-dependent noise. For example, in stock markets, the announcements of some decisions from a leading company after the market closes may have complex effects (i.e.~non-linearity) on its stock price immediately after the market opening (i.e.~slow sampling interval and instantaneous effect) and its price volatility may also be changed (i.e.~history-dependent noise). 
Similarly, in education, students that recently earned good marks on algebra tests should also score well on an upcoming algebra exam with little variation (i.e.~history-dependent noise).

To the best of our knowledge, 
{existing frameworks' performances suffer in many real-world scenarios as they cannot address these aspects in a satisfactory way.} Especially, history-dependent noise has been rarely considered in past. A large category of the preceding works, called \emph{Granger causality} \citep{granger1969investigating}, is based on the fact that cause-effect relationships can never go against time. Despite many recent advances \citep{wu2020discovering, shojaie2010discovering, siggiridou2015granger, amornbunchornvej2019variable, lowe2022amortized, tank2018neural, bussmann2021neural, dang2018seq2graph, xu2019scalable}, they all rely on the absence of instantaneous effects with a fixed noise distribution. Constraint-based methods have also been extended for time series causal discovery \citep{runge2018causal, runge2020discovering}, {which is commonly applied by folding the time-series. This introduced new assumptions and translated the aforementioned requirements to challenges in conditional independence testing \citep{shah2020hardness}.}
Additionally, they require a stronger faithfulness assumption and can only identify the causal graph up to a Markov equivalence class without detailed functional relationships.

An alternative line of research leverages the development of causal discovery with functional causal models \citep{hyvarinen2010estimation, pamfil2020dynotears, peters2013causal}. They can model both instantaneous and lagged effects as long as they have theoretically guaranteed \emph{structural identifiability}. Unfortunately, they do not consider history-dependent noise. One central challenge of modelling this dependency is that noise depending on the lagged parents may break the model structural identifiability. For static data, \citet{khemakhem2021causal} proves the structural identifiability only when this dependency is restricted to a simple functional form. Thus, the key research question is whether the identifiability can be preserved with complex historical dependencies in the temporal setting.

Motivated by these requirements, we propose a novel temporal discovery called {\ModelName{}} (\emph{deep causal temporal \underline{r}elationship learning with \underline{hi}story dependent \underline{no}ise}), which can model non-linear lagged and instantaneous effects with flexible history-dependent noise. Our contributions are:
\begin{itemize}
    \item A novel {causal discovery framework }
    called {\ModelName{}}, which combines vector auto-regression and deep learning to model non-linear lagged and instantaneous effects with history-dependent noise. We also propose a principled training framework using variational inference.
    \item We prove that {\ModelName{}} is structurally identifiable
    . To achieve this, we provide general conditions for structural identifiability with history-dependent noise, of which \ModelName{} is a special case. Furthermore, we clarify relations to several previous works.
    \item We conduct extensive synthetic experiments with ablation studies to demonstrate the advantages of \ModelName{} and its robustness under model misspecification. Additionally, we compare its performance to a wide range of baselines in two real-world discovery benchmarks.
\end{itemize}


\section{Background}
\label{sec: Background}
In this section, we briefly introduce necessary preliminaries for \ModelName{}. In particular, we focus on structural equation models, Granger causality \citep{granger1969investigating} and vector auto-regression. 

\paragraph{Structural Equation Models (SEMs)} 
Consider $\mX\in\mathbb{R}^D$ with $D$ variables, SEM describes the causal relationships between them given a causal graph $\mG$:
\begin{equation}
    X^i = f_i(\Pa_G^i, \epsilon^i)
\end{equation}
where $\Pa_G^i$ are the parents of node $i$ and $\epsilon^i$ are mutually independent noise variables.
Under the context of multivariate time series, 
$\mX_t=\left(X_t^i\right)_{i\in\mV}$ where $\mV$ is a set of nodes with size $D$, the corresponding SEM given a temporal causal graph $\mG$ is
\begin{equation}
    X_t^i = f_{i,t}(\PaGst, \PaGt, \epsilon_{t}^i),
    \label{eq: Temporal SEM}
\end{equation}
where $\PaGst$ contains the parent values specified by $G$ in previous time (\emph{lagged parents}); $\PaGt$ are the parents at the current time $t$ (\emph{instantaneous parents}). 
The above SEM induces a joint distribution over the stationary time series $\{\mX_{t}\}_{t=0}^T$ (see \cref{assumption 1} in \cref{app: Structure Identifiability} for the definition). 
However, functional causal models with the above general form cannot be directly used for causal discovery due to the structural unidentifiability (Lemma 1, \citet{zhang2015estimation} 
One way to solve this is sacrificing the flexibility by restricting the functional class. For example, additive noise models (ANM), \citep{hoyer2008nonlinear}
\begin{equation}
    X^i=f_i(\Pa_G(X^i)) + \epsilon_i,
    \label{eq: DECI SEM}
\end{equation}
which have recently been used for causal reasoning with non-temporal data \citep{geffner2022deep}. 


\paragraph{Granger Causality} Granger causality \citep{granger1969investigating} has been extensively used for temporal causal discovery. It is based on the idea that the series $\mX^j$ does not Granger cause $\mX^i$ if the history, $\mX^j_{<t}$, does not help the prediction of $X^i_t$ for some $t$ given the past of all other time series $\mX^k$ for $k\neq j,i$.
\begin{definition}[Granger Causality \citep{tank2018neural, lowe2022amortized}]
Given a multivariate stationary time series $\{\mX_t\}_{t=0}^T$ and a SEM $f_{i,t}$ defined as 
\begin{equation}
    X_t^i=f_{i,t}(\PaGst)+\epsilon_{t}^i,
\end{equation}
$\mX^j$ Granger causes $\mX^i$ if $\exists l\in [1,t]$ such that $X_{t-l}^j\in \PaGst$ and $f_{i,t}$ depends on $X_{t-l}^j$.
\label{def: Granger Causality}
\end{definition}
Granger causality is equivalent to causal relations for \emph{directed acyclic graph} (\emph{DAG}) if there are no latent confounders and instantaneous effects \citep{peters2013causal, peters2017elements}. Apart from the lack of instantaneous effects, it also ignore the history-dependent noise with independent $\epsilon_t^i$. 


\paragraph{Vector Auto-regressive Model}
Another line of research focuses on directly fitting the identifiable SEM to the observational data with instantaneous effects. One commonly-used approach is called vector auto-regression \citep{hyvarinen2010estimation, pamfil2020dynotears}:
\begin{equation}
X_t^i = \beta^i +\sum_{\tau=0}^K\sum_{j=1}^DB_{\tau,ji}X_{t-\tau}^j +\epsilon_{t}^i
    \label{eq: Vector Auto-regressive SEM}
\end{equation}
where $\beta^i$ is the offset, $\tau$ is the model lag, $\mB_{\tau}\in\R^{D\times D}$ is the weighted adjacency matrix specifying the connections at time $t-\tau$ (i.e.~if $B_{\tau,ji}=0$ means no connection from $X^j_{t-\tau}$ to $X^i_t$) and $\epsilon_{t}^i$ is the independent noise. Under these assumptions, the above linear SEM is structurally identifiable, which is a necessary condition for recovering the ground truth graph \citep{hyvarinen2010estimation, peters2013causal, pamfil2020dynotears}. However, the above linear SEM with independent noise variables is too restrictive to fulfil the requirements described in \cref{sec: introduction}. Therefore, the research question is how to design a structurally identifiable non-linear SEM with flexible history-dependent noise.

\section{\ModelName: Relationship learning with history dependent noise}
\label{sec: AR-DECI}
This section introduces the \ModelName{} model: \Cref{subsec: AR-DECI formulation} describes specific choices in the form of \ModelName{}'s SEM, allowing for history-dependent noise. \Cref{subsec: variational inference for AR-DECI} details how vartiaional inference can be leveraged to perform causal discovery with the proposed functional form of the SEM.
\subsection{Model formulation}
\label{subsec: AR-DECI formulation}
For a multivariate stationary time series $\{\mX_{t}\}_{t=0}^T$, we assume that their causal relations follow a temporal adjacency matrix $\mG_{0:K}$ with maximum lag $K$ where $\mG_{\tau\in[1,K]}$ specifies the lagged effects between $\mX_{t-\tau}$ and $\mX_t$, $\mG_0$ specifies the instantaneous parents. We define $G_{\tau,ij}=1$ if $X_{t-\tau}^i \rightarrow X_t^j$ and $0$ otherwise.
\footnote{In the following, we interchange the usage of the notation $\mG$ and $\mG_{0:K}$ for brevity.} 
We propose a novel functional causal model 
that incorporates non-linear relations, instantaneous effects, and flexible history-dependent noise, called \ModelName{}:
\begin{equation}
X_t^i = f_i(\PaGst, \PaGt) + g_i(\PaGst,\epsilon_{t}^i)
    \label{eq: SEM for AR-DECI}
\end{equation}
where $f_i$ is a general differentiable non-linear function, and $g_i$ is a differentiable transform s.t. the transformed noise has a proper density. 
Despite that \ModelName{} has an additive structure, 
our formulation offers much more flexibility in both functional relations and noise distributions compared to previous works \citep{pamfil2020dynotears, peters2013causal}. By placing few restrictions on $f_i, g_i$, \ModelName{} can capture functional non-linearity through $f_i$ and transform $\epsilon_{t}^i$ through a flexible function $g_i$, depending on $\PaGst$, to capture the history dependency of the additive noise. 

Next, we propose flexible functional designs for $f_i, g_i$, which must respect the relations encapsulated in $\mG$. Namely, if $X_{t-\tau}^j\notin \PaGst\cup \PaGt$, then ${\partial f_i}/{\partial X_{t-\tau}^j}=0$ and similarly for $g_i$. We design
\begin{equation}
    f_i(\PaGst,\PaGt)=\zeta_i\left(\sum_{\tau=0}^K\sum_{j=1}^DG_{\tau,ji}\ell_{\tau j}\left(X_{t-\tau}^j\right)\right)
    \label{eq: model design of AR-DECI}
\end{equation}
where $\zeta_i$ and $\ell_{\tau i}$ ($i\in [1,D]$ and $\tau\in [0,K]$) are neural networks. For efficient computation, we use weight sharing across nodes and lags: $\zeta_i(\cdot) = \zeta(\cdot, \vu_{0,i})$ and $\ell_{\tau j}(\cdot) = \ell(\cdot, \vu_{\tau,j})$, where $\vu_{\tau,i}$ is the trainable embedding for node $i$ at time $t-\tau$.

The design of $g_i$ needs to properly balance the flexibility and tractability of the transformed noise density for the sake of training. 
{We thus choose} a conditional normalizing flow, called conditional spline flow \citep{trippe2018conditional,durkan2019neural, pawlowski2020deep}, with a fixed Gaussian noise $\epsilon_{t}^i$ for all $t$ and $i$. The spline bin parameters are predicted using a hyper-network with a similar form to \cref{eq: model design of AR-DECI} to incorporate history dependency. The only difference is now $\tau$ is summed over $[1,K]$ to remove the instantaneous parents. 
Due to the invertibility of $g_i$, the noise likelihood conditioned on lagged parents is
\begin{equation}
    p_{g_i}(g_i(\epsilon_{t}^i)\vert \PaGst) = p_{\epsilon}(\epsilon_{t}^i)\left\vert\frac{\partial g_i^{-1}}{\partial \epsilon_{t}^i}\right\vert.
    \label{eq: AR-DECI conditional flow probability}
\end{equation}

\subsection{Variational Inference for \ModelName{}}
\label{subsec: variational inference for AR-DECI}
\ModelName{} adopts a Bayesian view of causal discovery \citep{heckerman2006bayesian}, which aims to learn a graph posterior distribution instead of inferring a single graph. For $N$ observed multivariate time series $\multitimeX{1},\ldots,\multitimeX{N}$, the joint likelihood of \ModelName{} is 
\begin{equation}
    p(\multitimeX{1},\ldots, \multitimeX{N},\mG) = p(\mG)\prod_{n=1}^Np_{\theta}(\multitimeX{n}\vert \mG)
    \label{eq: AR-DECI joint likelihood}
\end{equation}
where $\theta$ are the model parameters. Once fitted, the posterior $p(\mG|\multitimeX{1},\ldots,\multitimeX{N})$ incorporates the belief of the underlying causal relationships.

\paragraph{Graph Prior} When designing the graph prior, we combine three components: (1) DAG constraint; (2) graph sparseness prior; (3) domain-specific prior knowledge (optional). Inspired by the NOTEARS \citep{zheng2018dags, geffner2022deep, morales2021vicause}, we propose the following unnormalised prior
\begin{equation}
    p(\mG) \propto \exp\left(-\lambda_s \Vert\Graph[0]\Vert_{F}^2-\rho h^2(\singlegraph)-\alpha h (\singlegraph)-\lambda_p\Vert\Graph-\Graph^p\Vert_F^2\right)
    \label{eq: AR-DECI Graph Prior}
\end{equation}
where $h(\mG)=\tr(e^{\mG\odot \mG})-D$ is the DAG penalty proposed in \citep{zheng2018dags} and is 0 if and only if $\mG$ is a DAG; $\odot$ is the Hadamard product; $\mG^p$ is an optional domain-specific prior graph, {which can be used when partial domain knowledge is available}
; $\lambda_s$, $\lambda_p$ specify the strength of the graph sparseness and domain-specific prior terms respectively; $\alpha$, $\rho$ characterize the strength of the DAG penalty. Since the lagged connections specified in $\mG_{1:K}$ can only follow the direction of time, only the instantaneous part, $\mG_0$, can contain cycles. Thus, the DAG penalty is only applied to $\singlegraph$.

\paragraph{Variational Objective}
Unfortunately, the exact graph posterior $p(\mG|\multitimeX{1},\ldots,\multitimeX{N})$ is intractable due to the large combinatorial space of DAGs. To overcome this challenge, we adopt variational inference \citep{blei2017variational, zhang2018advances}, which uses a variational distribution $\vardist$ to approximate the true posterior. We choose $\vardist$ to be a product of independent Bernoulli distributions (refer to \cref{app: variational dist formulation} for details). The corresponding \emph{evidence lower bound} (\emph{ELBO}) is
\begin{equation}
\log p_\theta\left(\multitimeX{1},\ldots,\multitimeX{N}\right)\geq \underbrace{\E_{\vardist}\left[\sum_{n=1}^N\log p_\theta(\multitimeX{n}\vert \mG)+\log p(\mG)\right] + H(\vardist)}_{\text{ELBO}(\theta,\phi)}
    \label{eq: AR-DECI ELBO}
\end{equation}
where $H(\vardist)$ is the entropy of $\vardist$. 
From the causal Markov assumption and auto-regressive nature, we can further simplify
\begin{equation}
\log p_\theta(\multitimeX{n}|\mG) = \sum_{t=0}^T\sum_{i=1}^D\log p_\theta(X_t^{i,(n)}|\PaGst, \PaGt)
    \label{eq: AR-DECI forward likelihood}
\end{equation}
and from \ModelName{}'s functional form (\cref{eq: SEM for AR-DECI}) proposed in \cref{subsec: AR-DECI formulation}
\begin{equation}
\log p_\theta(X_t^{i,(n)}\vert \PaGst, \PaGt)=\log p_{g_i}\left(z_t^{i,(n)}\vert \PaGst\right)
    \label{eq: AR-DECI individual likelihood}
\end{equation}
where $z_t^{i,(n)} = X_t^{i,(n)}-f_i(\PaGst,\PaGt)$ and $p_{g_i}$ is defined in \cref{eq: AR-DECI conditional flow probability} (\cref{app: ELBO derivation} for details).
The parameters $\theta$, $\phi$ are learned by maximizing the ELBO, where the Gumbel-softmax gradient estimator is used for $\phi$ \citep{jang2016categorical, maddison2016concrete}. 
We also leverage augmented Lagrangian training \citep{hestenes1969multiplier,andreani2008augmented}, similar as \citet{geffner2022deep}, to anneal $\alpha, \rho$ in the prior to make sure \ModelName{} only produces DAGs (refer to Appendix B.1 in \citet{geffner2022deep}). 
Once \ModelName{} is fitted, the temporal causal graph can be inferred by $\mG\sim q_\phi(\mG)$. 

\paragraph{Treatment effect estimation}
{As \ModelName{} learns the causal graph and the functional relationship simultaneously, }
our model can be extended for causal inference tasks such as treatment effect estimation \citep{geffner2022deep}. See \cref{app: Treatment effect estimation} for details. 

\section{Theoretical Considerations}
\label{sec: theoretical considerations}
In this section, we focus on the theoretical guarantees of \ModelName{} including (1) structural identifiability and (2) the soundness of the proposed variational objective. Together, they guarantee the validity of \ModelName{} as a causal discovery method. In the end, we clarify its relations to existing works. 
\subsection{Structural Identifiability}
\label{subsec: AR-DECI structure identifiability}
One of the key challenges for causal discovery with a flexible functional relationship is to show the structural identifiability. Namely, we cannot find two different graphs that induce the same joint likelihood from the proposed functional causal model. In the following, we present a theorem for \ModelName{} that summarizes our main theoretical contribution. 

\begin{theorem}[Identifiability of \ModelName{}]
Assuming \ModelName{} satisfies the \emph{causal Markov property, causal minimality, causal sufficiency and the induced joint likelihood has a proper density} (see \cref{app: Structure Identifiability} for details), and we further assume (1) all functions and induced distributions of \ModelName{} are third-order differentiable; (2) function $f_i$ is \emph{non-linear} and \emph{not invertible} w.r.t. any nodes in $\PaGt$; (3) the double derivative $(\log p_{g_i}(g_i(\epsilon_{t}^i)\vert \PaGst))''$ w.r.t $\epsilon_{t}^i$ is zero at most at some discrete points, then \ModelName{} defined in \cref{eq: SEM for AR-DECI} is structural identifiable for \emph{both bivariate and multivariate time series}. 
\label{thm: identifiability of AR-DECI}
\end{theorem}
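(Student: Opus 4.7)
The strategy is to reduce the temporal identifiability problem to the static restricted-ANM identifiability theory of Hoyer et al.\ (2008) and Peters et al.\ (2014), executed \emph{conditionally on the history} $\mX_{<t}$. The crucial observation is that although $g_i$ depends on the lagged parents in a complex way, these parents are strictly temporally prior to $X_t^i$, so conditioning on them freezes the noise density into an ordinary (fixed-distribution) additive noise model and circumvents the restrictions that Khemakhem et al.\ (2021) needed in the static case. Suppose for contradiction that two distinct temporal DAGs $G$ and $G'$ induce the same joint distribution $p(\mX_{0:T})$; the goal is to derive $G=G'$.

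The main work identifies the instantaneous graph $\mG_0$. Fix any history value $\mX_{<t}=\vx_{<t}$ and define $\tilde f_i(\cdot):=f_i(\vx_{<t},\cdot)$ and $\tilde g_i(\epsilon):=g_i(\vx_{<t},\epsilon)$. The conditional model
\[
X_t^i = \tilde f_i(\PaGt) + \tilde g_i(\epsilon_t^i)
\]
is then a static additive noise model with mutually independent noises $\tilde g_i(\epsilon_t^i)$ whose log-density has non-vanishing second derivative except on a discrete set (assumption (3)) and with regression function $\tilde f_i$ that is third-order differentiable, non-linear, and non-invertible in every instantaneous parent (assumption (2)). These are exactly the non-degeneracy hypotheses of restricted-ANM identifiability, so the static theorem pins down $\mG_0$ from $p(\mX_t\mid \mX_{<t}=\vx_{<t})$. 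With $\mG_0$ in hand, the lagged graph $\mG_{1:K}$ follows by a Granger-style argument combined with causal minimality: the identified conditional density determines on which lagged variables the identified $f_i$ and $p_{g_i}(\cdot\mid\mX_{<t})$ have a non-trivial dependence, while minimality excludes spurious lagged edges. Combining both steps forces $G=G'$, covering both the bivariate and multivariate cases.

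The step I expect to be the main obstacle is confirming that the static-ANM PDE argument survives the history-dependence of the noise. At each fixed $\vx_{<t}$, the density $p_{\tilde g_i}(\cdot\mid\vx_{<t})$ is a different spline-flow pushforward, and the third-order PDE characterising unidentifiable bivariate ANMs (Zhang--Hyv\"arinen 2009) must be verified to admit only trivial solutions for \emph{every} such pushforward. I would handle this by showing that, for a fixed history, any non-trivial solution forces $\tilde f_i$ to be affine in some $X_t^j\in\PaGt$ or $\log p_{\tilde g_i}$ to be quadratic in its argument, each of which contradicts assumption (2) or (3). The bivariate-to-multivariate extension then proceeds by the usual induction along a topological order of $\mG_0$, carried out inside the conditional distribution, so that pairwise identifiability at each node lifts to the full instantaneous graph.
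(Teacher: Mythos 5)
Your proposal follows essentially the same route as the paper: fix the history $\mX_{<t}=\vx_{<t}$ so the history-dependent noise freezes into an ordinary additive noise model, invoke the Zhang--Hyv\"arinen third-order differential-equation argument (using non-invertibility of $f_i$ in the instantaneous parents and the non-vanishing second derivative of the noise log-density) to identify $\mG_0$ conditionally, rule out disagreements on lagged edges via the Markov property versus causal minimality, and lift the bivariate result to the multivariate case along the lines of Peters et al.\ (2012). The paper packages this as a KL decomposition over time plus a three-way case analysis, and proves the result for a more general history-dependent post-non-linear model of which \ModelName{} is the identity-outer-map special case, but the mathematical content is the same. The only piece you omit is the initial segment of the series: for $t\leq K$ there is no full history to condition on, and the paper handles this by separately requiring an identifiable source model for the initial conditions (condition 1 of its general theorem); you should add that assumption or an equivalent treatment of the first $K$ steps.
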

\begin{sproof}
This theorem is a summary of a collection of theorems proved in \cref{app: Structure Identifiability}. The strategy is 
instead of directly proving the identifiability of \ModelName{}, we provide identifiability conditions for a general temporal SEMs, followed by showing a generalization of \ModelName{} satisfies these conditions. The identifiability of \ModelName{} directly follows from it. 
\paragraph{Prove bivariate identifiability conditions for general temporal SEMs} 
The first step is to prove the bivariate identifiability conditions that a general temporal SEM (\cref{eq: Temporal SEM}) should satisfy (refer to \cref{thm: general identifiability} in \cref{subapp: general identifability conditions}). In a nutshell, we proved the functional causal model is bivariate identifiable if (1) the model for initial conditions is identifiable; (2) the model is \textbf{identifiable w.r.t. instantaneous parents}. Remarkably, (2) implies we only need to pay attention to instantaneous parents for identifiability, and opens the door for flexible lagged parent dependency.
This theorem assumes \emph{causal Markov, minimality, sufficiency and proper density} assumptions. 

\paragraph{Identifiability of history-dependent post non-linear model} Next, we propose a novel generalization of \ModelName{}, called \emph{history-dependent PNL}. \Cref{thm: bivariate identifiable history dependent PNL} and \cref{corollary: Validity of neural network} in \cref{subapp: Identifiability of history dependent PNL} prove it is {bivariate identifiable} w.r.t. instantaneous parents (i.e.~satisfy the conditions in \cref{thm: general identifiability}) with additional assumptions (1), (2) and (3) in \cref{thm: identifiability of AR-DECI}. The history-dependent PNL is defined as
\[
X_t^i = \nu_{it}\left(f_{it}\left(\PaGst[i], \PaGt[i]\right)+g_{it}\left(\PaGst,\epsilon_{it}\right), \PaGst\right),
\]
where $\nu$ is invertible w.r.t.~the first argument. 
The bivariate identifiability of \ModelName{} directly follows from this, since \ModelName{} is a special case with $\nu$ being the identity mapping.
\paragraph{Generalization to multivariate case} In the end, inspired by \citet{peters2012identifiability}, we prove the above bivariate identifiability can be generalized to the multivariate case. Refer to \cref{thm: multivariate identifiability} in \cref{appsubsec: generalizing to multivariate time series} for details. 

\end{sproof}
\subsection{Validity of variational objective and relations to other methods}
Next, we show the validity of the variational objective (\cref{eq: AR-DECI ELBO}) in the sense that optimizing it can lead to the ground truth graph. Remarkably, Theorem 1 in \citet{geffner2022deep} justifies the validity of the variational objective under the same set of assumptions as \ModelName{}. 
\begin{theorem}[Validity of variational objective \citep{geffner2022deep}]
Assuming the conditions in \cref{thm: identifiability of AR-DECI} are satisfied, and we further assume that there is no model misspecification, then the solution $(\theta', q'_\phi(\mG))$ from optimizing \cref{eq: AR-DECI ELBO} with infinite data satisfies $q'_\phi(\mG)=\delta(\mG=\mG')$, where $\mG'$ is a unique graph. In particular, $\mG'=\mG^*$ and $p_{\theta'}(\singletimeX;\mG')=p(\singletimeX;\mG^*)$, where $\mG^*$ is the ground truth graph and $p(\singletimeX;\mG^*)$ is the true data generating distribution.
\label{thm: validity of variational objective}
\end{theorem}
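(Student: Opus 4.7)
The plan is to adapt the proof of Theorem~1 in Geffner et al.~(2022) to the temporal setting. Its three ingredients are (i) an infinite-data limit of the ELBO, (ii) a KL-based characterization of the maximizers of the resulting functional, and (iii) identifiability to select a single graph. Ingredient (iii) is supplied in the temporal regime by \cref{thm: identifiability of AR-DECI}, so the work is to verify that (i) and (ii) transfer to the time-series ELBO in \cref{eq: AR-DECI ELBO}, which after \cref{eq: AR-DECI forward likelihood,eq: AR-DECI individual likelihood} has the same structural form as in the static case.

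First I would normalize the ELBO by $N$ and take $N\to\infty$. Since the sequences $\multitimeX{n}$ are drawn i.i.d.\ from the true generating distribution $p(\singletimeX;\mG^*)$, the law of large numbers gives
\begin{equation}
\tfrac{1}{N}\,\mathrm{ELBO}(\theta,\phi) \;\longrightarrow\; \E_{q_\phi(\mG)}\!\left[\E_{p(\singletimeX;\mG^*)}\!\left[\log p_\theta(\singletimeX\mid\mG)\right]\right],
\end{equation}
while the terms $\tfrac{1}{N}\E_{q_\phi}[\log p(\mG)]$ and $\tfrac{1}{N}H(q_\phi)$ vanish, with the caveat that the augmented Lagrangian schedule on the penalty $h(\singlegraph)$ eventually restricts the support of $q_\phi$ to DAGs.

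Second, I would use the identity $\E_{p^*}[\log p_\theta(\singletimeX\mid\mG)] = -\KL\!\left(p^*\,\Vert\,p_\theta(\cdot\mid\mG)\right) + \text{const}$ to see that the limiting objective is maximized over $(\theta,\mG)$ iff $p_\theta(\singletimeX\mid\mG) = p(\singletimeX;\mG^*)$. By the no-misspecification assumption such a pair exists, and by \cref{thm: identifiability of AR-DECI} any maximizing pair must satisfy $\mG = \mG^*$ and $p_\theta(\singletimeX;\mG) = p(\singletimeX;\mG^*)$. Hence, restricted to the DAG support, the inner expectation is strictly maximized at the single graph $\mG=\mG^*$.

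Third, since the outer expectation over a distribution on a finite set of DAGs is maximized only by distributions concentrated on maximizers of the integrand, the optimal $q_\phi'$ must be the Dirac mass $\delta(\mG=\mG^*)$; the independent-Bernoulli family is expressive enough to represent this delta by choosing each edge indicator deterministically according to $\mG^*$. The main obstacle here is not identifiability — which \cref{thm: identifiability of AR-DECI} already settles — but the careful handling of the augmented Lagrangian limit to rigorously exclude non-DAG graphs and justify interchanging the $N\to\infty$ limit with the optimization over $(\theta,\phi)$; this step is orthogonal to our contributions and is handled exactly as in Geffner et al.~(2022).
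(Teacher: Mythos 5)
Your proposal is correct and follows essentially the same route as the paper, which simply imports Theorem~1 of \citet{geffner2022deep} (proved in their Appendix~A) and notes that the temporal ELBO has the same structural form after the decomposition in \cref{eq: AR-DECI forward likelihood}; your three ingredients — the infinite-data limit killing the prior and entropy terms, the KL characterization of the maximizers combined with \cref{thm: identifiability of AR-DECI}, and the concentration of $q_\phi$ on the unique maximizing DAG — are exactly the ingredients of that cited argument.
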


\paragraph{Relation to other methods} 
Many previous works of using functional causal model for causal time series discovery \citep{hyvarinen2010estimation, pamfil2020dynotears, tank2018neural, peters2013causal} are closely related to \ModelName{}. Since \ModelName{} incorporates history-dependent noise with flexible non-linear functional relations, it is the most flexible member of this family. Refer to \cref{appsec: relation to other methods} for details.

\section{Related Work}
\label{sec: related work}
Discovering causal relationships from time series has been a popular research question for several decades now. \citet{assaad2022survey} provides a comprehensive overview of causal discovery method for time series. In a nutshell, there are three main categories. 
The first category is Granger causality, where this field can be further split into (1) vector auto-regressive methods \citep{wu2020discovering, shojaie2010discovering, siggiridou2015granger, amornbunchornvej2019variable} and (2) deep learning based approaches \citep{lowe2022amortized, tank2018neural, bussmann2021neural, dang2018seq2graph, xu2019scalable}. Despite recent advances, all Granger causality methods cannot handle instantaneous effects, which can be observed due to the aggregation effect in a slow-sampling system. 
Additionally, they also assume a fixed noise distribution without history dependency. 

Using SEMs for time series discovery can mitigate the aforementioned two problems. VARLiNGaM \citep{hyvarinen2010estimation} extends the identifiability theory of linear non-Gaussian ANM \citep{shimizu2006linear} to vector auto-regression for modelling time series data. DYNOTEARS \citep{pamfil2020dynotears} leverages the recently proposed NOTEARS framework \citep{zheng2018dags} to continuously relax the DAG constraints for fully differentiable DAG structure learning. However, the above approach is still limited to linear functional forms. TiMINo \citep{peters2013causal} provides a general theoretical framework for temporal causal discovery with SEMs. Our theory leverages some of their proof techniques. Unfortunately, all the aforementioned methods assume no history dependency for the noise. On the other hand, \ModelName{} can model (1) non-linear function relations; (2) instantaneous effect; (3) and history-dependent noise at the same time.

The third category is constraint-based approaches based on conditional independence tests. Due to its non-parametric nature, it can handle history-dependent noise. PCMCI \citep{runge2019detecting} combines PC \citep{spirtes2000causation} and the momentary conditional independence test to discover the lagged parents from time series.
PCMCI$^+$ \citep{runge2018causal, runge2020discovering} further extends PCMCI to infer both lagged and instantaneous effects. CD-NOD \citep{huang2020causal} has recently been proposed to handle non-stationary heterogeneous data, where the data distribution can shift across time. Despite their generality, they can only infer MECs; cannot learn the explicit functional forms between variables; and require a stronger assumption than minimality (i.e.~faithfulness).

\section{Experiments}
\label{sec: Experiments}
\subsection{Synthetic data}
\label{subsec: synthetic data}
We evaluate our method on a large set of synthetically generated datasets with known causal graphs. We use the main body of this paper to present the overall performance of our method compared to relevant baselines and one ablation study on the robustness to lag mismatch. In \cref{sec:app_syn_results}, we conduct extensive analysis, including (1) on different graph type; (2) ablation on history-dependency; (3) ablation study on instantaneous effect.
This set of datasets are generated by various settings (e.g.~type of graphs, instantaneous/no instantaneous effect, etc.). 5 datasets are generated for each combination of settings with different seeds, yielding 160 datasets in total.
In order to comprehensively test \ModelName{}'s robustness, we deliberately generated  
$75\%$ of the datasets that mismatch the \ModelName{} configurations. 
Details of the data generation can be found in \cref{sec:app_syn_data}.

We compare \ModelName{} to a wide range of baselines, including VARLiNGaM \citep{hyvarinen2010estimation}, PCMCI$^+$\citep{runge2020discovering} and DYNOTEARS \citep{pamfil2020dynotears}.
PCMCI only outputs Markov equivalence classes (MECs). We resolve this by enumerating all DAGs in the MEC. For details on the methods, see \cref{sec:app_syn_methods}.
Additionally, we include two variants of \ModelName{}: (1) \ModelName{}+g, where an independent Gaussian noise is used; (2) \ModelName{}+s, where Gaussian $\epsilon_i$ is transformed by an independent spline.

\Cref{fig:synth_f1_avg_all} presents the F$_1$ score of the lagged, instantaneous and temporal adjacency matrix of all methods aggregated over all datasets\footnote{It is worth noting that we run each method on 40 different dataset settings for all possible numbers of nodes.}, denoted as 'Lag', 'Inst.' and 'Temporal', respectively.
\ModelName{} achieves overall competitive or the best performance in terms of the full temporal adjacency matrix across all possible datasets, especially for lower dimensions.
Comparing \ModelName{}'s lagged discovery to its two variants, the better score indicates the history-dependent noise is useful to the lagged graph discovery, contributing to the better overall F$_1$ performance (\cref{sec:app_syn_results} for ablation: with/without history dependency). 

Despite of the strong performance from PCMCI$^{+}$, it can only identify the graph up to MECs without explicit functional relations. Computationally, PCMCI$^{+}$ exceeds the maximum training time of \textbf{1 week} on 40 nodes (see \cref{sec:app_syn_results}), suggesting its computation bottleneck in high dimensions.

DYNOTEARS achieves inferior results in general due to limited modelling power from the linear nature. This is much clearer in high dimensions due to the increasing difficulty of the problem.

\begin{figure}[htb]
    \centering
    \includegraphics[width=\linewidth]{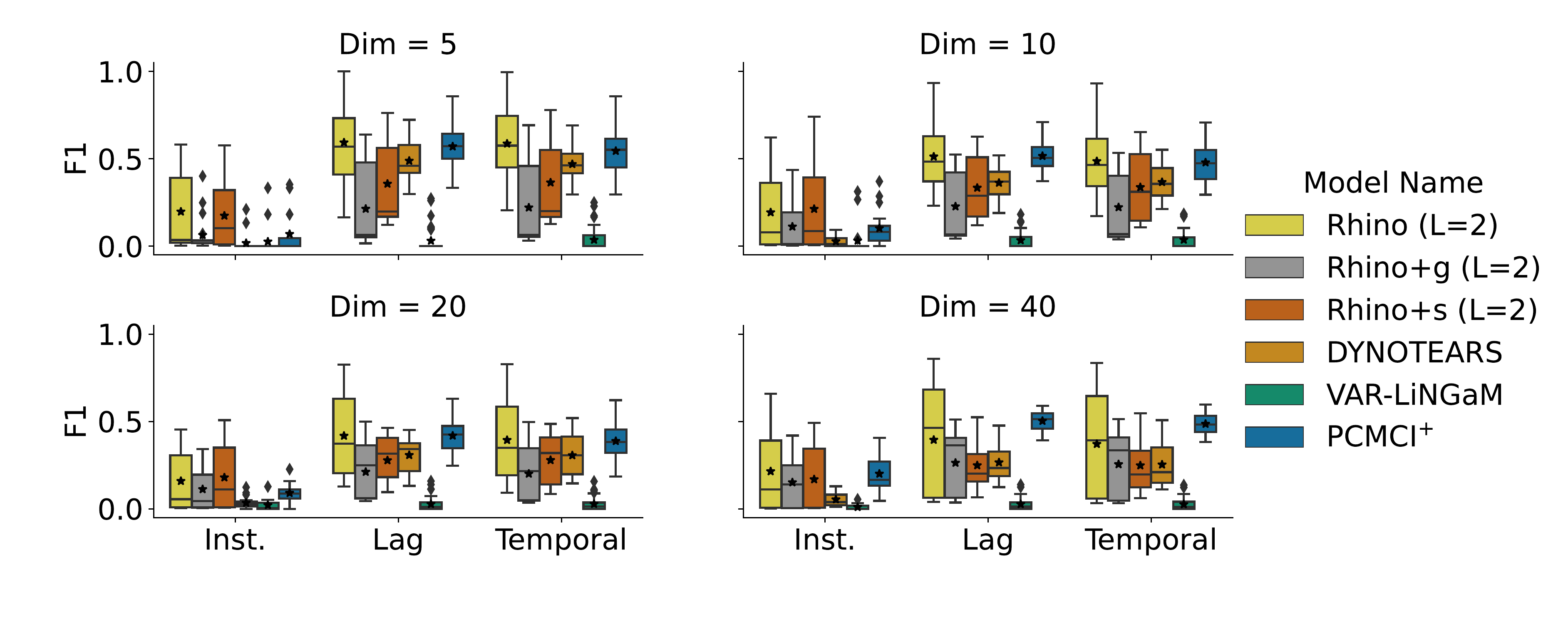}
    \vspace{-2em}
    \caption{$F_1$-scores of \ModelName{} (light yellow) compared to all baseline methods. The different subplots show the performance for the different number of nodes in the datasets. `L=2' refers to models with lag 2.}
    \label{fig:synth_f1_avg_all}
\end{figure}

We explore the behaviour of \ModelName{} with different lag parameters other than the ground truth lag 2. From \cref{tab:rhino_lags}, worse training log-likelihoods suggest that \ModelName{} with insufficient history ($\mathrm{lag}=1$) is unable to correctly model the data and this leads to a decrease in F$_1$ scores. Interestingly, \ModelName{} is also robust with longer lags. Despite of the slightly better likelihood ($\mathrm{lag}=3$), it achieves comparable performance to the model with the correct lag. Also, from their similar F$_1$ Lag score, it suggests the extra adjacency matrix is nearly empty.   
\begin{table}[htb]
    \centering
    \begin{tabular}{llrrr}
\toprule
Dim & & Rhino (L=1) & Rhino (L=2) & Rhino (L=3) \\
\midrule
5  & F$_1$ Inst. &    $0.11 \pm 0.17$ &    $0.20 \pm 0.22$ &    $0.21 \pm 0.23$ \\
   & F$_1$ Lag &    $0.28 \pm 0.13$ &    $0.59 \pm 0.22$ &    $0.57 \pm 0.24$ \\
   & F$_1$ Temporal &    $0.34 \pm 0.12$ &    $0.59 \pm 0.20$ &    $0.56 \pm 0.22$ \\
   & LL &   $-4.14 \pm 1.63$ &   $-3.83 \pm 1.62$ &   $-3.75 \pm 1.64$ \\
 \midrule
10 & F$_1$ Inst. &    $0.13 \pm 0.17$ &    $0.19 \pm 0.23$ &    $0.19 \pm 0.22$ \\
   & F$_1$ Lag &    $0.26 \pm 0.08$ &    $0.51 \pm 0.17$ &    $0.48 \pm 0.19$ \\
   & F$_1$ Temporal &    $0.28 \pm 0.11$ &    $0.49 \pm 0.18$ &    $0.45 \pm 0.20$ \\
   & LL &   $-7.97 \pm 2.09$ &   $-7.21 \pm 2.22$ &   $-7.01 \pm 1.91$ \\
\midrule
20 & F$_1$ Inst. &    $0.15 \pm 0.16$ &    $0.16 \pm 0.17$ &    $0.18 \pm 0.19$ \\
   & F$_1$ Lag &    $0.24 \pm 0.12$ &    $0.42 \pm 0.22$ &    $0.40 \pm 0.21$ \\
   & F$_1$ Temporal &    $0.25 \pm 0.13$ &    $0.39 \pm 0.22$ &    $0.37 \pm 0.21$ \\
   & LL &  $-15.62 \pm 3.16$ &  $-14.70 \pm 2.87$ &  $-14.72 \pm 2.82$ \\
\midrule
40 & F$_1$ Inst. &    $0.13 \pm 0.16$ &    $0.22 \pm 0.23$ &    $0.18 \pm 0.21$ \\
   & F$_1$ Lag &    $0.20 \pm 0.18$ &    $0.40 \pm 0.31$ &    $0.34 \pm 0.30$ \\
   & F$_1$ Temporal &    $0.20 \pm 0.18$ &    $0.37 \pm 0.30$ &    $0.32 \pm 0.29$ \\
   & LL &  $-31.44 \pm 5.16$ &  $-30.10 \pm 4.71$ &  $-30.20 \pm 4.74$ \\
\bottomrule
\end{tabular}
    \caption{Comparison of the causal discovery performance of \ModelName{} with different lag-parameters ($L \in [1, 3]$). Apart from the 3 F$_1$ scores,  LL shows the log-likelihood of the training data.}
    \label{tab:rhino_lags}
\end{table}

\subsection{DREAM3 Gene network}
\label{subsec: DREAM3}
In this section, we evaluate \ModelName{} performance with a real-world biology benchmark called \emph{DREAM3} \citep{prill2010towards, marbach2009generating}. These datasets are often used to evaluate Granger causality \citep{khanna2019economy, tank2018neural, nauta2019causal, bussmann2021neural} but recently adopted for SEM-based method \citep{pamfil2020dynotears}. The dataset consists \emph{in silico} measurements of gene expression levels for $5$ different networks. Each network contains $d=100$ genes. Each time series represents a perturbation trajectory with time length $T=21$. For each network, $46$ perturbation trajectories are recorded. The goal is to infer the causal structure of each network. We use the \emph{area under the ROC curve} (AUROC) as the performance metric. We consider the same baselines as in the synthetic experiments (i.e.~DYNOTEARS and PCMCI$^+$) without \emph{VARLiNGaM} since its default implementation fails when the number of variables ($d=100$) is greater than the series length ($T=21$). Additionally, we also consider relevant Granger causality methods, including \emph{cMLP}, \emph{cLSTM} \citep{tank2018neural}; \emph{TCDF}\citep{nauta2019causal}; \emph{SRU} and \emph{eSRU} \citep{khanna2019economy}). Their corresponding results are directly cited from \citet{khanna2019economy}. \Cref{subsubapp: DREAM3 Hyperparams} specifies \ModelName{} hyperparameters. Since the ground truth graph is a summary graph (see \cref{def: summary graph} in \cref{subsubapp: DREAM3 adj matrix aggregation}), 
\cref{subsubapp: DREAM3 adj matrix aggregation} details about the post-processing step on aggregating temporal graph to summary graph for \ModelName{}, DYNOTEARS and PCMCI$^+$.

\begin{table}[htb]
\centering
\begin{tabular}{lccccc}
\toprule
Method  & E.Coli 1 & E.Coli 2 & Yeast 1 & Yeast 2 & Yeast 3 \\
\midrule
cMLP    & 0.644    & 0.568    & 0.585   & 0.506   & 0.528   \\
cLSTM   & 0.629    & 0.609    & 0.579   & 0.519   & 0.555   \\
TCDF    & 0.614    & 0.647    & 0.581   & 0.556   & 0.557   \\
SRU     & 0.657    & 0.666    & 0.617   & 0.575   & 0.55    \\
eSRU    & 0.66     & 0.629    & 0.627   & 0.557   & 0.55    \\
DYNO. &0.590 &0.547 &0.527 &0.526 &0.510\\
PCMCI$^+$ &$0.530\pm 0.002$&$0.519\pm 0.002$ &$0.530\pm 0.003$ &$0.510\pm 0.001$ &$0.512\pm 0$ \\
{\ModelName{}}+g &\textbf{0.673}$\bm{\pm 0.013}$ &0.665$\bm{\pm 0.009}$ &\textbf{0.659}$\bm{\pm 0.005}$ &\textbf{0.598}$\bm{\pm 0.004}$ &\textbf{0.588}$\bm{\pm 0.005}$ \\
\ModelName{} & \textbf{0.685$\bm{\pm {0.003}}$}         & \textbf{0.680}$\bm{\pm 0.007}$         &\textbf{0.664$\bm{\pm 0.006}$}         & 0.585${\pm 0.004}$        & 0.567$\pm 0.003$\\        
\bottomrule
\end{tabular}
\caption{The AUROC of the summary graph adjacency matrix for 5 datasets in DREAM3, where self-connections are ignored. DYNO. means DYNOTEARS. For \ModelName{} and PCMCI$^+$, the mean results with standard error are reported by averaging over $5$ runs. \citet{khanna2019economy} only reported the single-run results for baselines.}
\label{table: Exp DREAM3 AUROC}
\end{table}

\cref{table: Exp DREAM3 AUROC} demonstrates the AUROC of the summary graph inferred after training. It is clear that \ModelName{} and its variant outperform all other methods. Although \ModelName{} is not formulated to solve the summary graph discovery, it shows a clear advantage compared to the state-of-the-art Granger causality. Thus, \ModelName{} can be used to infer either temporal or summary graph depending on users' needs. 

By inspecting the hyperparameters of \ModelName{} in \cref{subsubapp: DREAM3 Hyperparams}, instantaneous effects seem to provide no obvious help for in these datasets. It suggests the recording intervals are fast enough to avoid any aggregation effect. This explains why the Granger causality can also perform reasonably well. 

Unlike the strong performances of DYNOTEARS and PCMCI$^+$ in synthetic experiments, they perform poorly in DREAM3. The linear nature of DYNOTEARS seems to harm its performance drastically. PCMCI$^+$ suffers from the low independence test power under small training data.


Another interesting ablation is to compare with \ModelName{}+g, which performs on par with \ModelName{} and achieves better scores on $2$ out of $5$ datasets. Although we have no access to the true noise mechanism, we suspect that the added noise is not history-dependent and highly likely to be Gaussian. Despite the model mismatch, \ModelName{} is still one of the best methods for this problem. This further strengthens our belief in the robustness of our model under different setups.

\subsection{Netsim Brain Connectivity}
\label{subsec: Netsim}
In this section, we evaluate \ModelName{} using \emph{blood oxygenation level dependent} (BOLD) imaging data, which has also been used as a benchmark for temporal causal discovery \citep{lowe2022amortized, khanna2019economy, assaad2022survey}. Each time series represents the BOLD signal simulated for a human subject, which describes $d=15$ different regions in the brain. Each series contains $T=200$ timestamps. The goal of this task is to infer the connectivity between different brain regions. We assume that different human subjects share the same connectivity. We only use the data from human subject $2-6$ in \emph{Sim-3.mat} from \url{https://www.fmrib.ox.ac.uk/datasets/netsim/index.html} and also include self-connections during evaluation.
We use the same set of baselines as DREAM3 (\cref{subsec: DREAM3}) plus VARLiNGaM. 
\cref{subsubapp: Netsim hyperparams} describes hyperparameter settings. 


\begin{wraptable}[20]{r}{0.50\textwidth}
\centering
\vspace{-0.5cm}
\begin{tabular}{lc}
\toprule
Method  & AUROC \\
\midrule
cMLP    & 0.93  \\
cLSTM   & 0.83  \\
TCDF    & 0.91  \\
SRU     & 0.80  \\
eSRU    & 0.88  \\ 
DYNO. & 0.90\\
PCMCI$^+$ & $0.83\pm 0$ \\
VARLiNGaM & $0.84\pm 0$\\
\midrule
\ModelName{}+g & $0.974\pm 0.002$\\
{\ModelName{}}+NoInst. & $0.93\pm 0.006$ \\
\ModelName &  $\bm{0.99\pm 0.001}$     \\
\bottomrule
\end{tabular}
\caption{The AUROCs of the summary graph. \ModelName{}+NoInst is the \ModelName{} without the instantaneous effects. For \ModelName{}, VARLiNGaM, PCMCI$^+$, the results are obtained by averaging over $5$ different runs. }
\label{tab: Exp Netsim AUROC}
\end{wraptable}

\cref{tab: Exp Netsim AUROC} shows the AUROCs for different methods. Remarkably, the proposed \ModelName{} and its variants achieve significantly better AUROC compared to the baselines. Especially, \ModelName{} obtains nearly optimal AUROC, demonstrating its robustness to the small dataset and good balances between true and false positive rates (see \cref{app: netsim AUROC metric}). By comparing \ModelName{} and \ModelName{}+NoInst., we conclude that modelling instantaneous effects is important in real application, indicating the sampling interval is not frequent enough to explain everything as lagged effects. This can be double confirmed by comparing \ModelName{}+NoInst with Granger causality, where it performs on par with the state-of-the-art baseline when disabling the instantaneous effect. Last but not least, by comparing \ModelName{}+g with \ModelName{}, we find that history-dependent noise is also helpful in this dataset.


\section{Conclusion}
\label{sec:conclusion}

Inferring temporal causal graphs from observational time series is an important task in many scientific fields. Especially, some applications (e.g.~education, climate science, etc.) require the modelling of non-linear relationships; instantaneous effects and history-dependent noise distributions at the same time. Previous works fail to offer an appropriate solution for all three requirements. Motivated by this, we propose \ModelName{}, which combines vector auto-regression with deep learning and variational inference to perform causal temporal relationship learning with all three requirements. Theoretically, we prove the structural identifiability of \ModelName{} with flexible history-dependent noise, and clarify its relations to existing works. Empirical evaluations demonstrate its superior performance and robustness when \ModelName{} is misspecified, and the advantages of history-dependent noise mechanisms. This opens an exciting route of extending \ModelName{} to handle non-stationary time-series and unobserved confounders in future work.

\section{Reproducibility Statement}
\paragraph{Theoretical Contributions}
The main theoretical contribution is summarized in \cref{thm: identifiability of AR-DECI}. This theorem is the result from a collection of theorems proved in \cref{app: Structure Identifiability}. In \cref{app: Structure Identifiability}, we detailed the fundamental assumptions (\cref{assumption 1}-\cref{assumption 5}) required for the all theorems. The theorem-specific assumptions are mentioned in the statement of the theorem. To ease the understanding of the proof, we also provide the skecth of proof in \cref{thm: identifiability of AR-DECI}. Since \cref{thm: validity of variational objective} is directly cited from \citep{geffner2022deep} without major modification, the proof can be found in Appendix A in \citet{geffner2022deep}.

\paragraph{Empirical Evaluations} For synthetic, DREAM3 and Netsim experiments, we listed the hyperparameters in \cref{sec:app_syn_methods}, \cref{subsubapp: DREAM3 Hyperparams} and \cref{subsubapp: Netsim hyperparams}, respectively. \cref{sec:app_syn_data} explains the synthetic data generation. For DREAM3 and Netsim, the dataset can be found in the public github repo \url{https://github.com/sakhanna/SRU_for_GCI/tree/master/data}. The post processing steps for DREAM3 and Netsim evaluations are described in \cref{subsubapp: DREAM3 adj matrix aggregation}.
\bibliography{iclr2023_conference}
\bibliographystyle{iclr2023_conference}

\clearpage
\appendix
\section{ELBO and likelihood derivation}
\label{app: ELBO derivation}
The goal is to derive a lower bound for the joint likelihood $p_\theta(\multitimeX{1},\ldots, \multitimeX{N})$.
\begin{align}
\centering
    &p_\theta(\multitimeX{1},\ldots, \multitimeX{N})\nonumber \\
    =&\log \int p_\theta\left(\multitimeX{1},\ldots, \multitimeX{N}\vert \mG\right)p(\mG)d\mG \nonumber \\
    =&\log \int \frac{\vardist}{\vardist}p_\theta\left(\multitimeX{1},\ldots, \multitimeX{N}\vert \mG\right)p(\mG)d\mG \nonumber \\
    \geq& \int \vardist \log p_\theta\left(\multitimeX{1},\ldots, \multitimeX{N}\vert \mG\right) p(\mG)d\mG + H(\vardist) \label{appeq: ELBO Jensen inequality}\\
    =&\E_{\vardist}\left[\sum_{n=1}^N\log p_{\theta}(\multitimeX{n}\vert\mG)+\log p(\mG)\right] + H(\vardist) \nonumber
\end{align}
where \cref{appeq: ELBO Jensen inequality} is obtained by using Jensen's inequality. 

We can further simplify the likelihood $p_\theta(\multitimeX{n}\vert\mG)$:
\begin{align}
    \log p_\theta(\multitimeX{n}\vert\mG) =& \log \prod_{t=0}^T p_\theta(\mX_t^{(n)}\vert \mX_{<t}^{(n)},\mG)\nonumber\\
    =&\sum_{t=0}^T\log p_\theta\left(\mX_t^{(n)}\vert\mX_{<t}^{(n)},\mG\right)\nonumber \\
    =&\sum_{t=0}^T\sum_{i=1}^D\log p_\theta\left(X_t^{i,(n)}\vert \PaGst, \PaGt\right)\label{appeq: individual likelihood decomposition}
\end{align}
where \cref{appeq: individual likelihood decomposition} is obtained through Markov factorization \citep{lauritzen1996graphical}. 
\section{Structural Identifiability}
\label{app: Structure Identifiability}

In this section, we will focus on proving the structural identifiability of \ModelName{}. Before diving into the details, let us clarify the required assumptions.
\begin{assumption}[Causal Stationarity \citep{runge2018causal}]
The time series process $\mX_t$ with a graph $\mG$ is called \emph{causally stationary} over a time index set $\mathcal{T}$ if and only if for all links $X_{t-\tau}^i\rightarrow X_{t}^j$ in the graph
\[
X_{t-\tau}^i \not \ind X_t^j \vert \mX_{<t}\backslash \{X_{t-\tau}^i\} \text{ holds for all }t\in \mathcal{T}
\]
\label{assumption 1}
\end{assumption}
This characterizes the nature of the time-series data generating mechanism, which validates the choice of the auto-regressive model.
\begin{assumption}[Causal Markov Property \citep{peters2017elements}]
Given a DAG $\mG$ and a joint distribution $p$, this distribution is said to satisfy causal Markov property w.r.t. the DAG $\mG$ if each variable is independent of its non-descendants given its parents.
\label{assumption 2}
\end{assumption}
This is a common assumptions for the distribution induced by an SEM. With this assumption, one can deduce conditional independence between variables from the graph. 
\begin{assumption}[Causal Minimality]
Consider a distribution $p$ and a DAG $\mG$, we say this distribution satisfies causal minimality w.r.t. $\mG$ if it is Markovian w.r.t. $\mG$ but not to any proper subgraph of $\mG$. 
\label{assumption 3}
\end{assumption}
Minimality is also a common assumption for SEMs \citep{hoyer2008nonlinear,zhang2012identifiability, peters2012identifiability}, which can be regarded as a weaker version of \emph{faithfulness} \citep{peters2017elements}. 
\begin{assumption}[Causal Sufficiency]
A set of observed variables $\mV$ is causally sufficient for a process $\mX_t$ if and only if in the process every common cause of any two or more variables in $\mV$ is in $\mV$ or has the same value for all units in the population. 
\label{assumption 4}
\end{assumption}
This assumption implies there are no latent confounders present in the time-series data. 
\begin{assumption}[Well-defined Density]
We assume the joint likelihood induced by the \ModelName{} SEM (\cref{eq: SEM for AR-DECI}) is absolutely continuous w.r.t. a Lebesgue or counting measure and $\vert \log p(\singletimeX;\mG) \vert<\infty$ for all possible $\mG$. 
\label{assumption 5}
\end{assumption}
This assumption is to make sure the induced distribution has a well-defined probability density function. It is also required for the equivalence of the global, local Markov property and Markov factorization property (Theorem 6.22 from \cite{peters2017elements}). 


In the following, we will structure the entire proof into three steps: 
\begin{enumerate}
    \item Prove a general conditions that the \emph{bivariate} time series model needs to satisfy for structural identifiability. This adapts from the theorem 1 in \citet{peters2013causal}.
    \item Prove that a generalized form of SEM, modified from the \emph{post non-linear} (\emph{PNL}) model \citep{zhang2012identifiability}, satisfies the conditions mentioned in step 1. The proposed \ModelName{} (\cref{eq: SEM for AR-DECI}) is a special case of the above SEM.
    \item In the end, we generalize the above indentifiability to the \emph{multivariate} case.
\end{enumerate}

\subsection{General Identifiability Conditions}
\label{subapp: general identifability conditions}
First, we derive the conditions required for identifiability for a general bivariate time series SEM, defined as 
\begin{equation}
    X_t^i = f_{i,t}\left(\PaGst, \PaGt, \epsilon_{t}^i\right).
    \label{appeq: general form of time series SEM}
\end{equation}
We call the above SEM \emph{transition model}, since it only defines the transition behavior rather than the initial conditions. We also need to incorporate a \emph{source model}, which characterizes the initial conditions:
\begin{equation}
    X_s^i = f_{i,s}(\Pa_{G}^{i}, \epsilon_{s}^i)
    \label{appeq: source model SEM}
\end{equation}
for $s\in[0,\sourcelength]$, where $\sourcelength$ is the length for the initial conditions and $\Pa_{G}^i$ contains the parents for node $i$. We define $p_s(\sourcenodes)$ as the induced joint distribution for the initial conditions. 

Now, we prove the following theorem.
\begin{theorem}[Identifiability conditions for bivariate time series]
Assuming \cref{assumption 1}-\ref{assumption 5} are satisfied, given a bivariate temporal process $\mX_{0:T}$ and $\mY_{0:T}$ that are governed by the above SEM (\cref{appeq: general form of time series SEM}) with source model (\cref{appeq: source model SEM}), then the above SEM for the bivariate temporal process is structural identifiable if the following conditions are true:
\begin{enumerate}
    \item Source model $f_{i,s}$ is structural identifiable for all $i=1,\ldots, D$ and $s\in[0,\sourcelength]$.
    \item The transition model (\cref{appeq: general form of time series SEM}) is \emph{bivariate identifiable} w.r.t the \emph{instantaneous parents}. Namely, if graph $\mG$ induced conditional distributions $p(X_t,Y_t|\PaGstXY)$, then $\nexists \mG'$ such that $\mG\neq \mG'$ and the induced conditional $\bar{p}(X_t,Y_t|\bPaGstXY)=p$ for all $t\in[\sourcelength+1, T]$.
\end{enumerate}
where $\PaGstXY$ is the union of the lagged parents of $X_t$ and $Y_t$ under $\mG$, and $\bPaGstXY$ is the union of parents under $\mG'$.
\label{thm: general identifiability}
\end{theorem}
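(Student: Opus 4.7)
My plan is to prove the contrapositive: if two graphs $\mG$ and $\mG'$ induce the same joint distribution on $(\mX_{0:T}, \mY_{0:T})$, then conditions 1 and 2 force $\mG = \mG'$. The backbone is to split the joint via Markov factorisation into a source block and a chain of per-step transition conditionals, and identify each piece independently.

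First I would invoke the Markov factorisation guaranteed by \cref{assumption 2} and \cref{assumption 5} to write
\begin{equation*}
p(\mX_{0:T}, \mY_{0:T}) = p_s(\mX_{0:\sourcelength}, \mY_{0:\sourcelength})\prod_{t=\sourcelength+1}^T p\left(X_t, Y_t \mid \mX_{<t}, \mY_{<t}\right),
\end{equation*}
and the analogous identity for $\mG'$. Because the two joints coincide, the source marginals, obtained by integrating out times $t > \sourcelength$, must also coincide; condition~1 then pins down the restriction of $\mG$ and $\mG'$ to the source window $[0, \sourcelength]$. Likewise, every per-step transition conditional $p(X_t, Y_t \mid \mX_{<t}, \mY_{<t})$ is derivable from the joint, and therefore agrees between the two graphs for every $t > \sourcelength$.

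For the transition part, the causal Markov property applied to $\mG$ reduces the above conditional to $p(X_t, Y_t \mid \PaGstXY)$, and on the $\mG'$ side to $\bar{p}(X_t, Y_t \mid \bPaGstXY)$; these two reductions, regarded as functions of the full past, are equal. This is exactly the hypothesis that condition~2 rules out unless $\mG = \mG'$ on the transition structure, so the lagged parent sets of $\{X_t, Y_t\}$ and the instantaneous orientation between $X_t$ and $Y_t$ must coincide. Causal stationarity (\cref{assumption 1}) ensures that this identified transition structure is inherited at every $t$. Combining with the source-window agreement yields $\mG = \mG'$.

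The main obstacle is the instantaneous slice: two graphs differing only in the direction of $X_t \leftrightarrow Y_t$, possibly with compensating changes to their lagged parent sets, could a priori produce the same transition conditional. Ruling this out is precisely the role of condition~2 and is what the subsequent history-dependent post non-linear argument has to establish. A secondary but delicate point is the invocation of minimality (\cref{assumption 3}) and sufficiency (\cref{assumption 4}) when collapsing the conditioning set from the full past $\mX_{<t} \cup \mY_{<t}$ down to $\PaGstXY$ (respectively $\bPaGstXY$); without these, unfaithful cancellations or hidden common causes could leave extra edges in one graph but not the other and break the comparison between the two reduced conditionals.
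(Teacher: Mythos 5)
Your overall architecture matches the paper's: both decompose the joint into a source block and a product of per-step transition conditionals (the paper phrases this as a chain-rule decomposition of $\KL[p\Vert\bar p]=0$, which is equivalent to your direct factorisation), dispatch the source window with condition~1, reduce each transition conditional to the parent sets via the Markov property, and invoke condition~2 for the instantaneous slice.

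There is, however, one genuine gap in how you handle the case where $\mG$ and $\mG'$ agree on the instantaneous orientation but disagree only on \emph{lagged} parents. You write that condition~2 forces "the lagged parent sets of $\{X_t,Y_t\}$ \emph{and} the instantaneous orientation" to coincide, but condition~2 — as it is meant and as it is later verified for the history-dependent PNL model (\cref{thm: bivariate identifiable history dependent PNL} only treats the flip $X_t\to Y_t$ versus $Y_t\to X_t$) — is an identifiability statement \emph{with respect to the instantaneous parents} only; it gives you nothing when the disagreement is purely in $\PaGstXY$ versus $\bPaGstXY$. That case needs its own argument, which the paper supplies: if, say, $X_{t-\tau}\to Y_t$ under $\mG$ but not under $\mG'$, then the Markov property under $\mG'$ yields $Y_t \ind X_{t-\tau}$ given the rest of the past (and non-descendants), while causal minimality (\cref{assumption 3}, via Proposition~6.16 of \citet{peters2017elements}) applied to $\mG$ yields the corresponding \emph{dependence}; these are incompatible with equality of the transition conditionals. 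You do gesture at minimality in your closing paragraph, but you attach it to the step of collapsing the conditioning set from the full past down to the parent set — that collapse is licensed by the Markov property alone. Minimality's actual job is the converse direction (edges present in $\mG$ must leave a footprint in the distribution), and it must be run as an explicit, separate case of the case analysis rather than absorbed into condition~2.
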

\begin{proof}
We prove this by contradiction. Assume we have an induced joint distribution $p(\singletimeX,\singletimeY)$ under $\mG$, and corresponding $\bar{p}$ under $\mG'$. We further assume the above two conditions in the theorem are met and $p=\bar{p}$ but $\mG\neq\mG'$.

Thus, we have $\KL[p\Vert\bar{p}]$ = 0. Due to the temporal nature of the model, we can further decompose it as the following:
\begin{align*}
    &\KL[p\Vert \bar{p}]\\
    =& \int p(\singletimeX,\singletimeY)\log \frac{p(\singletimeX,\singletimeY)}{\bar{p}(\singletimeX,\singletimeY)}d\singletimeX d\singletimeY\\
    =&\KL[\underbrace{p(\singletimeXsource,\singletimeYsource)}_{p_s}\Vert\underbrace{\bar{p}(\singletimeXsource,\singletimeYsource)}_{\bar{p}_s}]+\int p(\singletimeXsource,\singletimeYsource)\KL[p(\singletimeX[\sourcelength+1],\singletimeY[\sourcelength+1]\vert \singletimeXsource,\singletimeYsource)\Vert \\ &\bar{p}(\singletimeX[\sourcelength+1],\singletimeY[\sourcelength+1]\vert \singletimeXsource,\singletimeYsource)]d\singletimeXsource d\singletimeYsource\\
    =&\KL[p_s\Vert\bar{p}_s]+ \sum_{t=\sourcelength+1}^T\E_{ p(\singletimeXfull{0}{t-1}, \singletimeYfull{0}{t-1})}\left[\KL\left[p(X_t,Y_t\vert \singletimeXfull{0}{t-1}, \singletimeYfull{0}{t-1})\Vert \bar{p}(X_t,Y_t\vert \singletimeXfull{0}{t-1}, \singletimeYfull{0}{t-1})\right]\right]\\
    =&0.
\end{align*}
This means we have $\KL[p_s\Vert \bar{p}_s]=0$ and $\KL\left[p(X_t,Y_t\vert \singletimeXfull{0}{t-1}, \singletimeYfull{0}{t-1})\Vert \bar{p}(X_t,Y_t\vert \singletimeXfull{0}{t-1}, \singletimeYfull{0}{t-1})\right]=0$ almost everywhere. 
Inspired by the strategy used in \citep{peters2013causal}, We consider the following three scenarios:
\paragraph{Disagree on initial conditions} We assume $\mG$ and $\mG'$ disagree on the initial conditions. From the condition 1, we know the source model $f_{i,s}$ is identifiable. Namely, we cannot find $\mG\neq\mG'$ with disagreement on initial conditions such that $\KL[p_s\Vert \bar{p}_s]=0$. This is a contradiction, meaning that $\mG$ and $\mG'$ must agree on the connections between initial set of nodes.
\paragraph{Disagree on lagged parents only} This means for all $t\in[\sourcelength+1,T]$, the instantaneous connections at $t$ for $\mG$ and $\mG'$ are the same, and
$\exists t\in [\sourcelength+1, T]$ such that $\PaGstXY\neq \bPaGstXY$.
We can use a similar argument as the theorem 1 in \citet{peters2013causal}. W.l.o.g., we assume under $\mG$, we have $X_{t-\tau}\rightarrow Y_t$ and there is no connections between them under $\mG'$.
Thus, from Markov conditions, we have 
\[
Y_t\ind X_{t-\tau}\vert \singletimeXfull{0}{t-1}\cup \singletimeYfull{0}{t-1}\cup \text{ND}_{t}^Y\backslash \{Y_t,X_{t-\tau}\}
\]
under $\mG'$,
where $\text{ND}_t^Y$ are the non-descendants of node $Y_t$ at some time $t$. 
However, from the causal minimality and proposition 6.16 in \citet{peters2017elements}, we have 
\[
Y_t\not\ind X_{t-\tau}\vert \singletimeXfull{0}{t-1}\cup \singletimeYfull{0}{t-1}\cup \text{ND}_{t}^Y\backslash \{Y_t,X_{t-\tau}\}
\]
under $\mG$.
This means under this case, $\KL\left[p(X_t,Y_t\vert \singletimeXfull{0}{t-1}, \singletimeYfull{0}{t-1})\Vert \bar{p}(X_t,Y_t\vert \singletimeXfull{0}{t-1}, \singletimeYfull{0}{t-1})\right]\neq0$, which is a contradiction. 

\paragraph{Disagree also on instantaneous parents}
This scenarior means $\exists t\in[\sourcelength+1,T]$ such that they disagree on instantaneous parents. W.l.o.g. we assume $X_t\rightarrow Y_t$ under $\mG$ and $Y_t\rightarrow X_t$ under $\mG'$.

Let's define $\singletimeXfull{0}{t-1}\cup \singletimeYfull{0}{t-1}=\vh$, $\historyGY\subseteq \vh$ contains the values of $\PaGstXY[Y]$ under $\mG$, $\bhistoryGY\subseteq \vh$ contains the parent values under $\mG'$, and $\historyGX$, $\bhistoryGX$ accordingly. 
Thus, the induced conditional distributions from SEM (\cref{appeq: general form of time series SEM}) with $\mG$, $\mG'$ are
\[
p(X_t,Y_t\vert \historyGX\cup\historyGY)\;\;\;\;\text{and}\;\;\;\; \bar{p}(X_t,Y_t\vert \bhistoryGX\cup \bhistoryGY)
\]
From the Markov conditions, we have
\[
p(X_t,Y_t\vert \singletimeXfull{0}{t-1}, \singletimeYfull{0}{t-1}) = p(X_t,Y_t\vert \PaGstXY)
\]
Therefore, we have
\begin{align*}
    &\KL\left[p(X_t,Y_t\vert \vh)\Vert \bar{p}(X_t,Y_t\vert \vh)\right]\\
    =&0\\
    =&\KL[p(X_t,Y_t\vert  \historyGX\cup\historyGY)\Vert \bar{p}(X_t,Y_t\vert \bhistoryGX\cup \bhistoryGY)]\\
\end{align*}
for arbitrary $\vh$,
which contradicts the strucutral identifiability w.r.t. the instantaneous parents. 

In summary, with the two conditions, we cannot find $\mG\neq\mG'$ such that the induced joint $p(\singletimeX,\singletimeY)=\bar{p}(\singletimeX,\singletimeY)$, meaning that the SEMs defined as \cref{appeq: general form of time series SEM} and \cref{appeq: source model SEM} are identifiable w.r.t. \emph{bivariate time series}.
\end{proof}


Since one can use any identifiable static models to characterize the initial behavior of the time series, we will focus on condition 2 for the transition model. In the following, we will show that a generalization of PNL, called \emph{history-dependent PNL}, satisfies condition 2 under assumptions.

\subsection{Identifiability of history-dependent PNL}
\label{subapp: Identifiability of history dependent PNL}
First, we propose a generalization of PNL \citep{zhang2012identifiability} so that it can be history-dependent. For a multivariate temporal process $\singletimeX$, we propose \emph{history-dependent PNL} as 
\begin{equation}
    X_t^i = \nu_{it}\left(f_{it}\left(\PaGst[i], \PaGt[i]\right)+g_{it}\left(\PaGst,\epsilon_{it}\right), \PaGst\right)
    \label{appeq: SEM for history dependent PNL}
\end{equation}
where $\nu_{it}$ is an invertible transformation w.r.t. the first argument. The main differences of the above SEM compared to typical PNL are (1) the invertible transformation $\nu_{it}$ can be history dependent; (2) the inner noise distribution can also be history-dependent.

Next, we show the main theorem about its bivariate identifiability w.r.t. its instantaneous parents. 
\begin{theorem}[History-dependent PNL Bivariate Identifiability]
Assume \cref{assumption 1}-\ref{assumption 5} are satisfied, all transformations in \cref{appeq: SEM for history dependent PNL} and corresponding induced distributions are $3^{rd}$-order differentiable. Given a bivariate temporal process $\singletimeX$, $\singletimeY$, then the history-dependent PNL defined as \cref{appeq: SEM for history dependent PNL} is bivariate identifiable w.r.t its instantaneous parents (i.e.~satisfy condition 2 in \cref{thm: general identifiability}), except for some special cases. 
\label{thm: bivariate identifiable history dependent PNL}
\end{theorem}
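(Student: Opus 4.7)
The plan is to reduce this result to the classical static post non-linear (PNL) identifiability theorem of Zhang \& Hyvärinen by conditioning on the realization of the lagged parents. Fix a value $\vh$ of $\PaGstXY$. Conditional on $\vh$, the history-dependent PNL collapses into an ordinary PNL on the instantaneous variables, because $\nu_{it}(\cdot,\vh)$ is a fixed invertible map, $f_{it}(\vh,\cdot)$ is a fixed function of the instantaneous parents, and $g_{it}(\vh,\epsilon_{it})$ has a fixed (conditional) noise density in $\epsilon_{it}$. By \cref{assumption 2} and \cref{assumption 4}, the noise $\epsilon_{it}$ is independent of the instantaneous parents given $\vh$, so the conditional model for any edge $X_t\to Y_t$ reads
\[
Y_t = \nu_{2t}\bigl(F_t(X_t)+\tilde{g}_t(\epsilon_{2t}),\,\vh\bigr),\qquad \tilde{g}_t(\cdot)=g_{2t}(\vh,\cdot),\quad F_t(\cdot)=f_{2t}(\vh,\cdot),
\]
which is a standard bivariate PNL in $X_t,Y_t$.

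First I would set up this reduction rigorously and show that any alternative graph $\mG'$ that disagrees only on the instantaneous edge and induces the same conditional $p(X_t,Y_t\mid\vh)$ must itself define a valid PNL in the reverse direction with independent noise (again by \cref{assumption 2} applied under $\mG'$). Thus, for every admissible $\vh$, the pair of PNL representations we obtain falls into the classification of Zhang \& Hyvärinen (2012): either the directions agree (contradicting the assumed disagreement on instantaneous parents), or we land in one of the five degenerate cases catalogued there.

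Next I would argue that each of these five exceptional cases is ruled out by the standing assumptions. The third-order differentiability and the condition that $(\log p_{g_i}(g_i(\epsilon_t^i)\mid \PaGst))''$ vanishes at most on a discrete set exclude the Gaussian-noise/linear-mixing degeneracy (the PDE derived in the Zhang--Hyvärinen argument forces $(\log p)''\equiv 0$ on a set of positive measure in the exceptional cases). The assumption that $f_i$ is non-linear and not invertible with respect to any instantaneous parent eliminates the remaining linear and bijective exceptions. Because the reduction is pointwise in $\vh$, a non-identifiable alternative for the time series would force these degeneracies to hold for almost every $\vh$, which the assumptions forbid.

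The main obstacle is bookkeeping: verifying that when we carry over the static PNL classification under conditioning, the dependence of $\nu_{it}$ and $g_{it}$ on $\vh$ does not introduce a new avenue for non-identifiability, and that each of the five static exceptions genuinely corresponds to a violation of one of our explicit assumptions. This requires carefully re-deriving the Zhang--Hyvärinen differential constraint with $\vh$-dependent transformations and checking that the differentiability/nonvanishing assumptions are preserved under the conditioning. Once this is done, \cref{corollary: Validity of neural network} should then confirm that the neural-network parameterisation of $\ModelName{}$ in \cref{eq: model design of AR-DECI} satisfies the non-linearity and non-invertibility hypotheses generically, completing the bivariate identifiability of $\ModelName{}$ as the special case $\nu_{it}=\mathrm{id}$.
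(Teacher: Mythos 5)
Your proposal is correct and follows essentially the same route as the paper: fix the history value $\vh$, observe that conditioning collapses the model to a static PNL between the instantaneous variables, re-derive the Zhang--Hyv\"arinen differential constraint with $\vh$-dependent transformations, and invoke their classification of exceptional cases. The only minor difference is that you additionally try to rule out the five degenerate cases inside this theorem, whereas the paper keeps the ``except for some special cases'' caveat here and defers that exclusion (via the non-invertibility of $f$ and the condition on $(\log p_{g})''$) to \cref{corollary: Validity of neural network}.
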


\begin{proof}
W.l.o.g. at time $t\in[\sourcelength+1, T]$, we assume $X_t\rightarrow Y_t$ for instantaneous connection under $\mG$ and $Y_t\rightarrow X_t$ under $\mG'$. We fix a value $\vh$ for their entire history $\singletimeXfull{0}{t-1}\cup\singletimeYfull{0}{t-1}=\vh$. With $\vh$, we further define their lagged parents as $\PaGst[X]=\historyGX\subseteq \vh$, $\PaGst[Y]=\historyGY\subseteq \vh$ under $\mG$ and $\bPaGstXY[X]=\bhistoryGX\subseteq\vh$, $\bPaGstXY[Y]=\bhistoryGY$ under $\mG'$.

Therefore, the SEM at time $t$ can be written as
\begin{equation}
    Y_t=\nu\left(f(\historyGY,X_t)+g(\historyGY,\epsilon_{Y}), \historyGY\right)
    \label{appeq in deriv: PNL under G}
\end{equation}
and 
\begin{equation}
    X_t=\bar{\nu}\left(\bar{f}(\bhistoryGX,Y_t)+\bar{g}(\bhistoryGX,\epsilon_{X}), \bhistoryGX\right)
    \label{appeq in deriv: PNL under G'}
\end{equation}
under $\mG$ and $\mG'$, respectively.
Let's assume that their induced conditional distributions at time $t$ are equal (i.e.~violating the identifiable condition (2) in \cref{thm: general identifiability}):
\[
\underbrace{\log p(X_t,Y_t|\historyGX\cup \historyGY)}_{\text{under }\mG} = \underbrace{\log \bar{p}(X_t,Y_t\vert \bhistoryGX\cup \bhistoryGY)}_{\text{under }\mG'}
\]
From the Markov properties, the above equation is equivalent to 
\[
\log p(X_t,Y_t|\vh) = \log \bar{p}(X_t,Y_t|\vh)
\]
with a fixed value $\vh$ of the entire history.

Now, let's define 
\[
\alpha_t = \bnu^{-1}(X_t)\;\;\;\;\; \text{and}\;\;\;\;\; \beta_t=\nu^{-1}(Y_t)
\]
where we omits the dependence of $\bnu^{-1}$ to $\bhistoryGX$ and $\nu^{-1}$ to $\historyGY$. 
It is easy to observe that we have an invertible mapping between $(X_t, Y_t)$ and $(\alpha_t, \beta_t)$. Thus, from the change of variable formula, we have
\[
\log p(X_t,Y_t\vert \vh) = \log p_{\alpha,\beta}(\alpha_t,\beta_t\vert \vh) + \log \vert \mJ \vert
\]
and 
\[
\log \bar{p}(X_t,Y_t\vert \vh) = \log \bar{p}_{\alpha,\beta}(\alpha_t,\beta_t\vert \vh) + \log \vert \mJ \vert
\]
where $\mJ$ is the Jacobian matrix of the transformation. Thus, the equivalence of $\log p$ and $\log \bar{p}$ in the $(X_t, Y_t)$ space can be translated to $(\alpha_t,\beta_t)$ space. 

Thus, from \cref{appeq in deriv: PNL under G}, we have
\begin{equation}
    \beta_t = \Phi(\alpha_t)+g(\historyGY, \epsilon_{Y})
    \label{appeq in deriv: transformed PNL under G}
\end{equation}
under $\mG$. And from \cref{appeq in deriv: PNL under G'}, we have
\begin{equation}
    \alpha_t = \Psi(\beta_t) + \bar{g}(\bhistoryGX,\epsilon_X)
    \label{appeq in deriv: transformed PNL under G'}
\end{equation}
under $\mG'$. This forms an \emph{additive noise model} between $\alphat$, $\betat$ with history-dependent noise. Next, we can use a similar proof techniques as in \citet{hoyer2008nonlinear}.
Here, $\Phi(\cdot) = f(\historyGY,\cdot)\circ \bnu(\bhistoryGX,\cdot)$ and $\Psi(\cdot) = \bar{f}(\bhistoryGX, \cdot) \circ \nu(\historyGY, \cdot)$. 
We further define 
\begin{align*}
    &\eta_1(\alpha_t) = \log p(\alpha_t\vert \vh) &&\bar{\eta_1}(\beta_t) = \log \bar{p}(\beta_t\vert \vh)\\
    &\eta_2(g(\historyGY,\epsilon_{Y})) = \log p_g(g(\historyGY, \epsilon_{Y})|\vh) &&\bar{\eta}_2(\bar{g}(\bhistoryGX, \epsilon_X)) = \log \bar{p}_g(\bar{g}(\bhistoryGX,\epsilon_X)\vert \vh)
\end{align*}
Thus, under $\mG$ (i.e.~\cref{appeq in deriv: transformed PNL under G}), we have
\begin{align}
    \log p(\alphat,\betat\vert \vh) =& \log p(\beta_t\vert \alpha_t, \vh) + \log p(\alpha_t|\vh)\nonumber\\
    =&\eta_2(\beta_t-\Phi(\alpha_t)) + \eta_1(\alpha_t) \label{appeq in deriv: joint likelihood of transformed PNL under G}
\end{align}
Similarly, under $\mG'$ (i.e.~\cref{appeq in deriv: transformed PNL under G'}), we have
\begin{align}
    \log \bar{p}(\alpha_t,\beta_t) = \bar{\eta}_2(\alphat- \Psi(\betat))+ \bar{\eta}_1(\betat)
    \label{appeq in deriv: joint likelihood of transformed PNL under G'}
\end{align}

Based on \cref{appeq in deriv: joint likelihood of transformed PNL under G'}, we have
\[
\frac{\partial^2 \log \bar{p}}{\partial \alphat\partial\betat}=-\bar{\eta}_2''\Psi' \;\;\;\;\text{and}\;\;\;\; \frac{\partial^2 \log \bar{p}}{\partial \alphat^2}=\bar{\eta}_2''
\]
Thus, we have
\[
\frac{\partial}{\partial \alphat}\left(\frac{\partial^2\log \bar{p}/\partial\alphat\partial\betat}{\partial^2\log \bar{p}/\partial\alphat^2}\right) = 0
\]
Due to the equivalence of $\log \bar{p}$ and $\log p$, we apply the above operations to \cref{appeq in deriv: joint likelihood of transformed PNL under G}. After some algebraic manipulation, we obtained the following differential equations for $\eta_2''\Phi'\neq 0$:
\begin{equation}
    \eta_1''' -\frac{\eta_1''\Phi''}{\Phi'} = \left(\frac{\eta_2'\eta_2'''}{\eta_2''}-2\eta_2''\right)\Phi''\Phi' - \frac{\eta_2'''}{\eta_2''}\Phi'\eta_1''+\eta_2'\left(\Phi'''-\frac{(\Phi'')^2}{\Phi'}\right).
    \label{appeq in deriv: PNL differential equation}
\end{equation}
Interestingly, this is exactly equivalent to Eq.(4) in \citet{zhang2012identifiability}. The main difference is the definition of variables and transformations in here are all history-dependent. 

Further, we can also observe that
\[
\betat\ind \bar{g}(\historyGY,\epsilon_Y)\vert \singletimeXfull{0}{t-1}\cup \singletimeYfull{0}{t-1}=\vh.
\]
Since $\betat=\Phi(\alpha_t)+g(\historyGY,\epsilon_Y)$ and $\bar{g}(\bhistoryGX,\epsilon_X)=\alpha_t-\Psi(\betat)$, it is trivial to show the determinant of the Jacobian of the transformation $(\alpha_t,g)$ to $(\betat, \bar{g})$ is $1$. Thus, by a similar argument in theorem 1 from \citet{zhang2012identifiability}, we can derive 
\[
\frac{1}{\Psi'} = \frac{\eta_1''+\eta_2''(\Phi')^2-\eta_2'\Phi''}{\eta_2''\Phi'}
\]
for $\eta_2''\Phi'\neq 0$. 

Thus, the above two differential equations has the same form as theorem 1 in \citet{zhang2012identifiability} where the main difference is that all distributions and transformations involved in our case depend on history $\vh$. 

Therefore, we can directly cite the theorem 8 from \citet{zhang2012identifiability}, which proves that the above differential equations hold true only for $5$ types of special cases. One can refer to Table 1 in \citet{zhang2012identifiability} for details. 
\end{proof}

Corollary 10 from \citet{zhang2012identifiability} validates the choice of using nueral network for the transformation $f$. For completeness, we include it here with slight modification:
\begin{corollary}[Identifiability with neural netowrk $f$]
Assuming the assumptions in \cref{thm: bivariate identifiable history dependent PNL} are true, and the double derivative $(\log p_g(g(\PaGst[Y],\epsilon_Y)\vert \singletimeXfull{0}{t-1}\cup \singletimeYfull{0}{t-1}))''$ w.r.t $\epsilon_Y$ is zero at most at some discrete points. If function $f$ is not invertible \emph{w.r.t. the instantaneous parents}, then, the history-dependent PNL defined as \cref{appeq: SEM for history dependent PNL} is \emph{bivariate identifiable w.r.t. the instantaneous parents} (i.e.~satisfy condition 2 in \cref{thm: general identifiability}). 
\label{corollary: Validity of neural network}
\end{corollary}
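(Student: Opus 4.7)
The plan is to reduce \cref{corollary: Validity of neural network} to Corollary 10 of \citet{zhang2012identifiability} via the differential-equation machinery already assembled for \cref{thm: bivariate identifiable history dependent PNL}. The key observation is that non-invertibility of $f$ with respect to the instantaneous parents is inherited by the composed transformation $\Phi$ that governs the instantaneous interaction between $\alphat$ and $\betat$ in the proof of \cref{thm: bivariate identifiable history dependent PNL}, and this will let us exclude the invertible special cases enumerated in \citet{zhang2012identifiability}.

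First, I would note that in that proof the instantaneous dependence of $Y_t$ on $X_t$ under $\mG$ is captured by $\Phi(\alphat) = f(\historyGY, \bnu(\bhistoryGX, \alphat))$, where $\bnu$ is, by the definition of the history-dependent PNL, invertible in its first argument. Consequently $\Phi$ inherits the invertibility profile of $f$ in its instantaneous-parent slot: if $f$ fails to be invertible with respect to any instantaneous parent, then $\Phi$ is not invertible either. An analogous remark applies to $\Psi$ in the symmetric direction.

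Second, I would invoke the differential identity (\cref{appeq in deriv: PNL differential equation}) established in the proof of \cref{thm: bivariate identifiable history dependent PNL}, which must hold at every $\alphat$ whenever two distinct graphs $\mG$ and $\mG'$ induce the same conditional $p(X_t,Y_t\vert \vh)$. By Theorem 8 of \citet{zhang2012identifiability} (see their Table 1), this identity can be satisfied only in five explicitly enumerated special cases for the triple $(\Phi,\eta_1,\eta_2)$. Each such case forces either (i) $\eta_2''$ to vanish on a set of positive measure, or (ii) $\Phi$ to assume an invertible algebraic form (e.g.\ linear or exponential). The corollary's hypothesis that $(\log p_g(g(\historyGY,\epsilon_Y)\vert \vh))''$ is zero at most on a discrete set rules out (i), while the non-invertibility of $f$ in its instantaneous-parent argument, transferred to $\Phi$, rules out (ii). Therefore no $\mG'\neq\mG$ can match the conditional, which is exactly bivariate identifiability with respect to the instantaneous parents.

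The main obstacle is checking case-by-case that each of the five special configurations in Table 1 of \citet{zhang2012identifiability} truly falls into category (i) or (ii) once $f$ is implemented by a generic (non-invertible) neural network. However, because the history $\vh$ enters the local differential equation only as a fixed parameter at time $t$, the five cases and their exclusion arguments transport verbatim from the static PNL setting; the history-dependence does not introduce new algebraic branches. Thus the argument reduces, step by step, to that of Corollary 10 of \citet{zhang2012identifiability}, with the two stated hypotheses playing exactly the roles they play in the static case.
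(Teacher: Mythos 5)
Your proposal is correct and follows essentially the same route as the paper, which proves this corollary simply by invoking Corollary 10 of \citet{zhang2012identifiability} on top of the differential-equation machinery (\cref{appeq in deriv: PNL differential equation} and Table 1 of that reference) already set up in the proof of \cref{thm: bivariate identifiable history dependent PNL}. Your key observation --- that $\Phi = f(\historyGY,\cdot)\circ\bnu(\bhistoryGX,\cdot)$ inherits non-invertibility from $f$ because $\bnu$ is invertible in its first argument, so that all five special cases (each requiring an invertible $\Phi$ or a degenerate $\eta_2''$) are excluded --- is exactly the content of that cited corollary, just spelled out in more detail than the paper does.
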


It is clear to see that \ModelName{} (\cref{eq: SEM for AR-DECI}) is a special case of the history-dependent PNL (\cref{appeq: SEM for history dependent PNL}), where the outer history-dependent invertible transformation $\nu$ is the identity mapping. Thus, we can directly leverage \cref{thm: general identifiability} together with \cref{thm: bivariate identifiable history dependent PNL} to show \ModelName{} is identifiable w.r.t bivariate time series, and \cref{corollary: Validity of neural network} to validate our design choice (\cref{eq: model design of AR-DECI}).
\subsection{Generalizing to multivariate time series}
\label{appsubsec: generalizing to multivariate time series}
Previously, we prove the identifiability conditions for bivariate time series. In this section, we will generalize it to the multivariate case. 
\begin{theorem}[Generalization to multivariate time series]
Assuming the assumptions in \cref{thm: bivariate identifiable history dependent PNL} are satisfied, we further assume that the multivariate SEM defined in \cref{appeq: SEM for history dependent PNL} satisfies: for each pair of node $i,j\in \mV$,
the SEM 
\[
X_t^i = \nu_{it}\left(f_{it}\left(\PaGst, \PaGt\backslash \{X_t^j\}, \underbrace{\cdot}_{X_t^j}\right)+g_{it}\left(\PaGst, \epsilon_{it}\right), \PaGst\right)
\]
is \emph{bivariate identifiable} w.r.t.~the input, and an identifiable source model is adopted. Then, the history-dependent PNL is \emph{identifiable except for some special cases}. 
\label{thm: multivariate identifiability}
\end{theorem}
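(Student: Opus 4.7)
The plan is to argue by contradiction and reduce every possible disagreement between two candidate graphs to cases already handled. Suppose $\mG \neq \mG'$ both induce the same joint distribution on $\singletimeX$. Applying the KL decomposition from the proof of \cref{thm: general identifiability} splits this equality into (i) equality of the source-model distributions and (ii) equality of every per-timestep conditional $p(\mX_t \mid \mX_{<t})$ for all $t \geq \sourcelength+1$. The identifiable-source assumption immediately rules out disagreement on the initial conditions. Any disagreement that involves only lagged parents at some $t$ can be ruled out by the same Markov-plus-causal-minimality argument used in the bivariate case: if $X_{t-\tau}^j$ is a lagged parent of $X_t^i$ under $\mG$ but not under $\mG'$, then conditioning on the appropriate non-descendant set yields independence under $\mG'$ but dependence under $\mG$ by Proposition~6.16 of \citet{peters2017elements}, a contradiction.

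The genuinely new piece is disagreement on instantaneous parents. Following the strategy of \citet{peters2012identifiability}, I would pick the earliest $t \geq \sourcelength+1$ at which the instantaneous DAGs $\mG_0$ and $\mG_0'$ differ, and then select a pair $(i,j)$ responsible for the disagreement; for concreteness, say the edge $X_t^j \to X_t^i$ appears in $\mG$ but is absent or reversed in $\mG'$. Conditioning on the full history $\singletimeXfull{0}{t-1}$ together with every current-time variable $X_t^k$ for $k \neq i,j$ collapses both multivariate SEMs to bivariate SEMs in $(X_t^i, X_t^j)$ whose form matches exactly the per-pair SEM written in the theorem statement, with $X_t^j$ playing the single free input of node $i$'s functional. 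The hypothesised per-pair bivariate identifiability (established for each such pair by \cref{thm: bivariate identifiable history dependent PNL} together with \cref{corollary: Validity of neural network}) then says these two bivariate conditionals cannot agree, contradicting the equality of the joints.

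The main obstacle I expect is the selection of the pair $(i,j)$ and the conditioning set so that the bivariate reduction is genuinely bivariate. If $i$ and $j$ have common instantaneous descendants at time $t$ that we condition on, the reduction can fail to decouple into the form of the hypothesis because conditioning on a common child introduces spurious dependence between the two noise terms. The topological-ordering trick from \citet{peters2012identifiability} is designed to handle exactly this: by taking the earliest pair in a topological order compatible with (at least) one of the two DAGs at which the two graphs disagree, one can guarantee that the conditioned-upon current-time variables are ancestors of $X_t^i$ and $X_t^j$ in both graphs, and therefore harmlessly absorbed into the effective history rather than creating colliders.

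A second subtlety, already flagged in the statement's ``except for some special cases'' clause, is that the underlying bivariate identifiability only fails inside the five exceptional classes characterised by \citet{zhang2012identifiability}. The multivariate statement inherits exactly those exclusions—no stronger claim is needed—so once the pair-selection argument above is in place, the multivariate identifiability follows directly from the bivariate result without any additional analytic work.
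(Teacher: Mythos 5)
Your proposal matches the paper's proof: the same KL decomposition into the source model and the per-timestep conditionals, the same disposal of the initial-condition and lagged-only disagreements via source identifiability and the Markov/minimality argument, and a reduction of the instantaneous disagreement to the pairwise bivariate hypothesis. The only difference is one of granularity — the paper invokes Theorem 2 of \citet{peters2012identifiability} (multivariate identifiability for IFMOCs, applied w.r.t.\ the instantaneous parents only) as a black box, whereas you sketch the internals of that reduction; the collider worry you raise about conditioning on all remaining current-time variables is exactly the issue that the graphical lemma behind that theorem resolves, by conditioning on a suitably chosen union of parent sets rather than on everything else at time $t$.
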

\begin{proof}
For this proof, we can follow the strategy used in \cref{thm: general identifiability} and \citet{peters2013causal}. We categorize the difference of the graph $\mG$ and $\mG'$ into three types. Following the same analysis of the $KL$ divergence of the two induced joint distributions, we can see that (1) $\KL[p_s\Vert \bar{p}_s] = 0$ and $\KL[p(\mX_t\vert \singletimeXfull{0}{t-1})\Vert \bar{p}(\mX_t\vert \singletimeXfull{0}{t-1})]=0$.

\paragraph{Disagree on initial conditions} Since we assume that the source model is identifiable, this contradicts $\KL[p_s\Vert \bar{p}_s]=0$. 
\paragraph{Disagree on lagged parents only} We notice that the analysis used in \cref{thm: general identifiability} for this disagreement can be directly translated to multivariate case. The only difference is that the notation $Y_t$, $X_t$ is changed accordingly. 
\paragraph{Disagree also on instantaneous parents}
For this case, with a fixed history value $\vh = \singletimeXfull{0}{t-1}$, the aim is to compare the conditionals $\KL[p(\mX_t\vert \singletimeXfull{0}{t-1}=\vh)\Vert \bar{p}(\mX_t\vert \singletimeXfull{0}{t-1}=\vh)]$. Thus, the problem becomes to how to generalize the bivariate identifiability for instantaneous parents to the multivariate case. We leverage the theorem 2 from \citet{peters2012identifiability}, which proves the multivariate identifiability for any models that belongs to IFMOC. It is easy to see that if the assumptions in \cref{thm: multivariate identifiability} are met, the history-dependent PNL belongs to IFMOC \emph{w.r.t.~the instantaneous parents}. It should be noted that the entire history-dependent PNL \emph{DOES NOT belong to IFMOC}, but this does not affect our results since we only care about the instantaneous parents under this case. 
\end{proof}

\section{Relation to other methods}
\label{appsec: relation to other methods}
\paragraph{VARLiNGaM \citep{hyvarinen2010estimation}} VARLiNGaM \citep{hyvarinen2010estimation} is a causal discovery method for time series data based on the linear vector auto-regression, which can model both lagged and instantaneous effects. Its SEM is defined as \cref{eq: Vector Auto-regressive SEM}, where the noise $\epsilon_{t}^i$ is an independent non-Gaussian noise. It is easy to observe that this is a special case of \ModelName{} (\cref{eq: SEM for AR-DECI}) by setting $f_i$ as the matrix multiplication of the weighted adjacency $\Graph$ with the nodes, and $g_i$ as the identity mapping. For the training objective, VARLiNGaM adopted a two stage training to sidestep the difficulty of directly optimizing the log likelihood. From the \cref{thm: validity of variational objective} for \ModelName{}, we note that the solution from optimizing the variational objective is equivalent to maximizing the log likelihood under infinite data limit. Therefore, by setting large enough DAGness penalty coefficient $\alpha$, $\rho$, the inferred graph from both methods should be equivalent.

\paragraph{DYNOTEARS \citep{pamfil2020dynotears}} The formulation of DYNOTEARS is the same as VARLiNGaM, which is based on linear vector auto-regression. The main novelty is the usage of the DAGness penalty $h(\mG)$, which continuously relaxes the DAG constraint. The training objective is the mean square error with augmented Lagrange scheme for DAGness penalty. Thus, it is obvious that DYNOTEARS is a special case of \ModelName{} with linear transformations and identity $g_i$. Similarly, \cref{thm: validity of variational objective} shows the connections between the variational objective and maximum likelihood, which is equivalent to mean square error if the noise distribution is \emph{Gaussian with equal variances}.

\paragraph{cMLP} cMLP \citep{tank2018neural} combines Granger causality with deep neural networks. The model formulation is 
\[
X_t^i = f_i(\mX_{0:t-1}^1, \ldots, \mX_{0:t-1}^D) + \epsilon_{t}^i
\]
where $f_i$ is a function based on MLP. Although the input is the entire history, the one that matters is the node that has the connection to $X_t^i$ (i.e.~lagged parents). Therefore, it is easy to see they are closely related to \ModelName{} without \emph{instantaneous parents} $\PaGt$ and history-dependent noise. Since the training objective of cMLP is based on the mean square error with sparseness constraint, by the same argument as before, the variational objective is equivalent to mean square error with equal variance Gaussian noise and large training data. 
\paragraph{TiMINo \citep{peters2013causal}} TiMINo is most similar to our work among all the aforementioned methods in terms of model formulation. TiMINo proposed a very general formulation based on IFMOC \citep{peters2012identifiability} and showed the conditions for structural identifiability. \ModelName{} generalizes the TiMINO in a way such that noise history dependency can be incorporated. Thus, \ModelName{} only belongs to IFMOC w.r.t.~the instantaneous parents. Therefore, \ModelName{} without the history-dependent noise is a TiMINo model. The training objective of TiMINo is based on the dependence minimization between the noise residuals and causes, and can only infer summary graph instead of temporal causal graph. \citet{zhang2015estimation} proved the equivalence of the mutual information minimization to maximum likelihood, which is equivalent to our variational objective under infinite data. 
\section{Treatment Effect Estimation}
\label{app: Treatment effect estimation}

We now show how to leverage the fitted \ModelName{} for estimating the \emph{conditional average treatment effect} (CATE). For simplicity, we only consider a special case of CATE defined as 
\begin{equation}
\CATE(a,b) = \E_{\vardist}\left[\E_{p(\mX_{t+\tau}^Y\vert \mX_{<t}, \doop(X_t^I=a), \mG)}[X_{t+\tau}^Y] - \E_{p(\mX_{t+\tau}^Y\vert \mX_{<t}, \doop(X_t^I=b),\mG)}[X_{t+\tau}^Y]\right]
    \label{eq: CATE definition}
\end{equation}
We assume the conditioning variable can only be $\mX_{<t}$ (i.e.~the entire history before $t$), and the intervention and target variable can only be either at current time $t$ or sometime in the future $t+\tau$. We emphasize that this formulation is for simplicity, and \ModelName{} can be easily generalized to more cases as \citet{geffner2022deep}. 
Once fitted, the idea is to draw target samples $X_{t+\tau}^Y$ from the interventional distribution $p(\mX_{t+\tau}^Y\vert \mX_{<t}, \doop(X_t^I), \mG)$ for each graph sample $\mG\sim\vardist$. Then, unbiased Monte Carlo estimation can be used to compute CATE. For sampling from the interventional distribution, we can use the "multilated" graph $\mG_{\doop(X_t^I)}$ to replace $\mG$, where all incoming edges to $X_t^I$ are removed. The intervention samples can be obtained by simulating the \ModelName{} with history $\mX_{<t}$, $X_t^I=a \text{ or } b$ and $\mG_{\doop(X_t^I)}$.

\subsection{Causal Inference Results}
\label{sec:app_syn_results}

\begin{figure}[!htb]
    \centering
    \includegraphics[width=\linewidth]{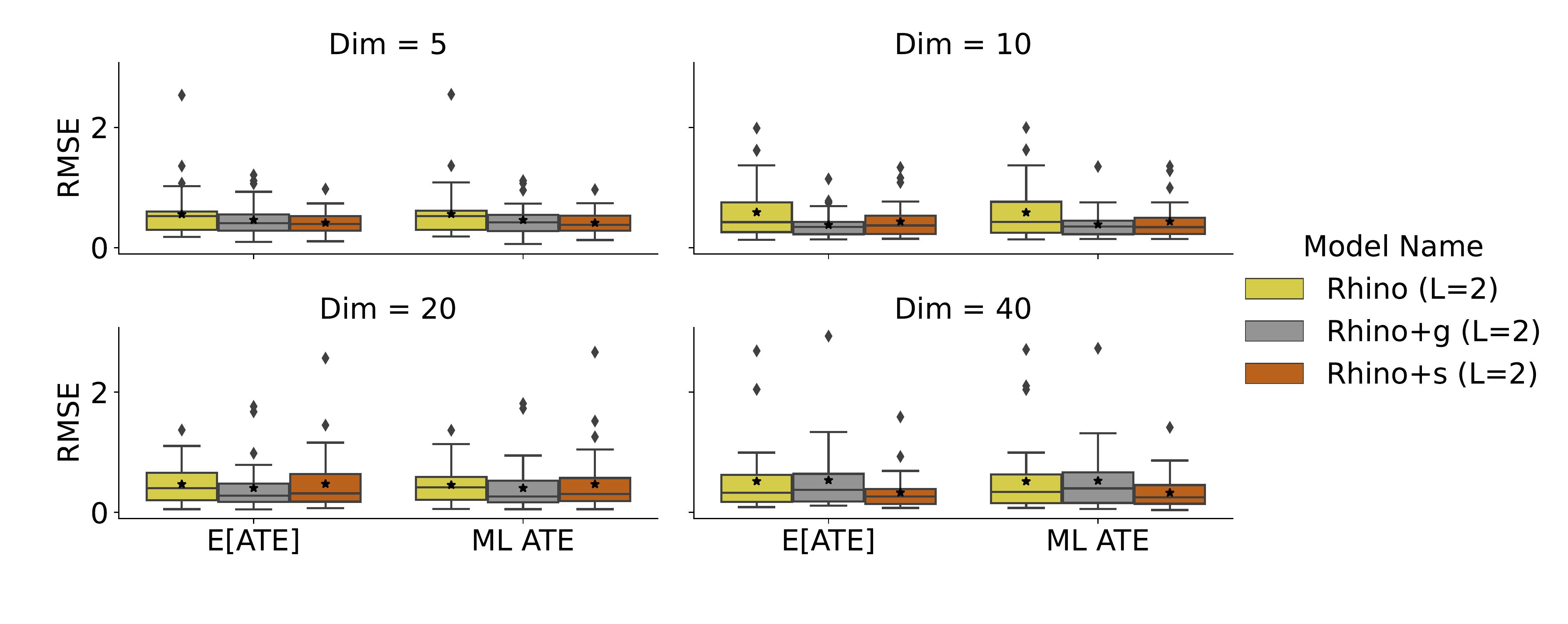}
    \caption{Comparison of the RMSE of the average treatment effects (CATEs) of the different instantiations of \ModelName{} depending on the dimensionality. $\mathbb{E}[\mathrm{CATE}]$ refers to RMSE of the expected CATE over the posterior graph distribution (i.e.~$\mG\sim q_\phi(\mG)$). ML ATE uses the most likely graph to calculate the ATE. These results are obtained by averaging 160 datasets, similar to the discovery setup.}
    \label{fig:synth_ate}
\end{figure}

Here, we provide the preliminary results for CATE performance of \ModelName{} by calculating the RMSEs of the estimated CATEs comparing to the true CATE from the interventional samples (lower is better). We present boxplots of the performance in \cref{fig:synth_ate}. All \ModelName{}-based method perform similarly. Surprisingly, the CATE performance seems to have little correlation to the causal discovery performance and warrants further study in the future.
\section{Variational distribution formulation}
\label{app: variational dist formulation}
Here we provide the detailed formulation of the independent Bernoulli distribution $\vardist$. Since this distribution is responsible for modelling the temporal adjacency matrix $\mG_{0:K}$, we use $\Sigma_k$ to represents the edge probability in $\mG_k$. We further split the edge probability matrices into the instantaneous part $\Sigma_0$ and lagged parts $\Sigma_{1:K}$. 

To avoid the constrained optimization of $\Sigma_{1:K}$ (i.e.~the value needs to be within $[0,1]$), we adopt the following formulation:
\begin{equation}
    \sigma_{k,ij} = \frac{\exp(u_{k,ij})}{\exp(u_{k,ij})+\exp(v_{k,ij})} 
\end{equation}
where $u_{k,ij}\in \mU_k$, $v_{k,ij}\in \mV_k$ and $\mU_k, \mV_k\in\mathbb{R}^{D\times D}$ for all $k=1,\ldots,K$. Since we do not require lagged adjacency matrix to be a DAG, $\mU_k, \mV_k$ has no constraints during optimization. 

On the other hand, $\mG_0$ needs to be a DAG for instantaneous effect. By smart formulation, we can get rid of the length-1 cycles. The intuition is that for a pair of node $i,j$, only three mutually exclusive possibilities can exist: (1) $i\rightarrow j$; (2) $j\rightarrow i$; (3) no edge between them. Thus, instead of using a full probability matrix $\Sigma_0$, we use three lower triangular matrices $\mU_0$, $\mV_0$ and $\mE_0$ to characterise the above three scenarios. For node $i>j$, 
\begin{align*}
p(i\rightarrow j ) &= \frac{\exp(u_{ij})}{\exp(u_{ij})+\exp(v_{ij})+\exp(e_{ij})}\\
p(j\rightarrow i) &= \frac{\exp(v_{ij})}{\exp(u_{ij})+\exp(v_{ij})+\exp(e_{ij})}\\
p(\text{no edge}) &= \frac{\exp(e_{ij})}{\exp(u_{ij})+\exp(v_{ij})+\exp(e_{ij})}.\\
\end{align*}
Thus, by this formulation, the corresponding instantaneous adjacency matrix will not contain length-1 cycles. 
\section{Synthetic Experiments}
\subsection{Data generation}
\label{sec:app_syn_data}

We create the synthetic datasets in a four step process: 1) generate random Erd\"{o}s–R\'enyi (ER) or scale-free (SF) graphs that specify the lagged and instantaneous causal relationships; 2) drawing random MLPs for the functional relationships as well as a random \emph{conditional} spline transformation to modulate the scale of the Gaussian noise variables $\epsilon$; 3) sample initial starting conditions and follow \cref{eq: Temporal SEM} with the additive noise to simulate the temporal progression; 4) removing the burn-in period and return stable timeseries. We consider four different axes of variation for the data generation: number of nodes $N_{nodes} \in [5, 10, 20, 40]$; ER or SF graphs; instantaneous or no instantaneous effects; and history-dependent or history-independent noise (i.e.~Gaussian noise). All combinations are generated with 5 different seeds, yielding 160 different datasets. 
Datasets with instantaneous effects have $4 \times N_{nodes}$ edges in the instantaneous adjacency matrix. All datasets have $2 \times N_{nodes}$ connections in the lagged adjacency matrices.
The MLPs for the functional relationships are fully-connected with two hidden layers,64 units and ReLU activation. In case of history-independent noise, we are using Gaussian as the base distribution. The history dependency is modelled as a product of a scale variable obtained by the transformation of the averaged lagged parental values through a random-sampled quadratic spline, and Gaussian noise variable.

The datasets with 40 nodes are generated with a series length of 400 steps, a burn-in period of 100 steps, and 100 training series. All other datasets are generated with a time-series length of 200, burn-in period of 50 steps and 50 training series. We generate random interventions for all the datasets by setting the treatment variable to 10 for intervention and -10 for reference. 5000 ground-truth intervention samples are used to estimate the true treatment effect.

\subsection{Methods}
\label{sec:app_syn_methods}
All benchmarks for the synthetic experiments are run by using publicly available libraries: VARLiNGaM \cite{hyvarinen2010estimation} is implemenented in the \texttt{lingam}\footnote{see \url{https://lingam.readthedocs.io}} python package. PCMCI$^{+}$\citep{runge2020discovering} is implemented in \texttt{Tigramite}\footnote{see \url{https://jakobrunge.github.io/tigramite/}}. We use the implementation in \texttt{causalnex}\footnote{see \url{https://causalnex.readthedocs.io/en/latest/}} to run DYNOTEARS\citep{pamfil2020dynotears}. We use the default parameters for all these baselines. For PCMCI$^{+}$, we enumerate all graphs in the Markov equivalence class to evaluate the causal discovery performance (see \cref{subsubapp: DREAM3 adj matrix aggregation} for details).

For \ModelName{} and its variants, we use the same set of hyper-parameters for all 160 datasets to demonstrates our robustness. By default, we allow \ModelName{} and its variants to model instantaneous effect; set the model lag to be the ground truth $2$ except for ablation study; the $\vardist$ is initialized to favour sparse graphs (edge probability$<0.5$); quadratic spline flow is used to for history-dependent noise. For the model formulation, we use 2 layer fully connected MLPs with 64 (5 and 10 nodes),  80 (10 nodes) and 160 (40 nodes) for all neural networks in \ModelName{}-based methods. We also apply layer normalization and residual connections to each layer of the MLPs. For the gradient
estimator, we use the Gumbel softmax method with a hard forward pass and a soft backward pass
with temperature of 0.25. All spline flows uses 8 bins. The embedding sizes for transformation (i.e.~\cref{eq: model design of AR-DECI} and conditional spline flow) is equal to the node number. 

For the sparseness penalty $\lambda_s$ in \cref{eq: AR-DECI Graph Prior}, we use 9 for \ModelName{} and \ModelName{}+s, and 5 for \ModelName{}+g. We set $\rho=1$ and $\alpha=0$ for all \ModelName{}-based methods. For optimization, we use Adam \citep{kingma2014adam} with learning rate 0.01. The training procedure follows from Appendix B.1 in \citet{geffner2022deep}. 


\subsection{Additional Causal Discovery Results}
\label{sec:app_syn_results}

\begin{figure}[!htb]
    \centering
    \includegraphics[width=\linewidth]{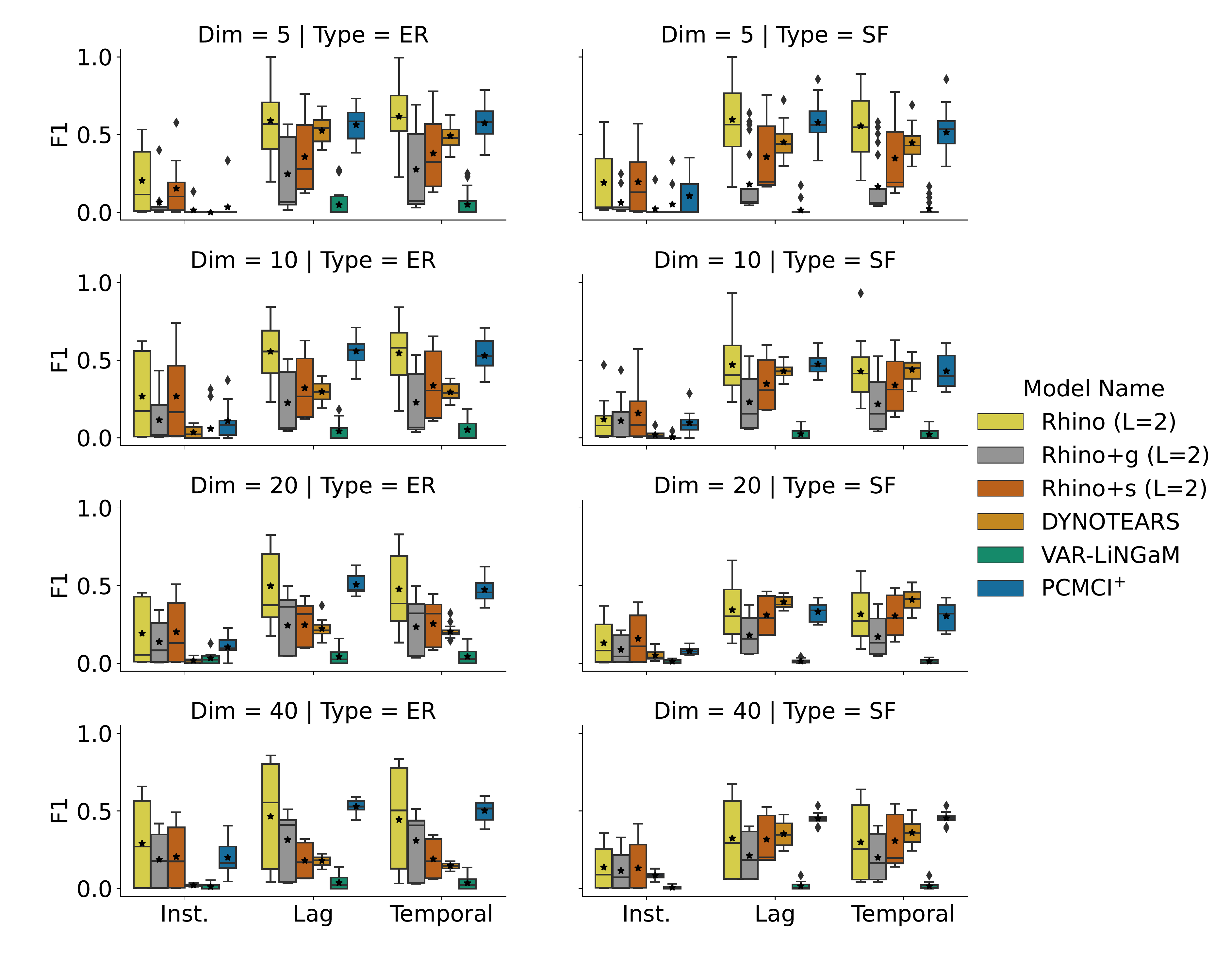}
    \caption{Comparison of the F$_1$ score of the different baseline methods as well as \ModelName{} (light yellow) depending on the dimensionality and the graph type. Inst. refers to the performance on the instantaneous adjacency matrix, Lag refers to the lagged adjancency matrices and temporal considers the full temporal matrix.}
    \label{fig:synth_er_sf}
\end{figure}

\paragraph{Ablation: different type of graphs} The first study is to test our model robustness to different types of graphs. \cref{fig:synth_er_sf} shows the discovery performance over ER or SF graph averaged over all other possible data setting combinations. Most methods perform better on ER graphs than on SF graphs, with only DYNOTEARS \citep{pamfil2020dynotears} as an exception. We note that the PCMCI$^{+}$ runs on SF graphs with 40 nodes exceed our maximum run time of \textbf{1 week}, showing its computational limitation in high dimensions. Nevertheless, \ModelName{} achieves consistent performance throughout all graph settings.

\begin{figure}[!htb]
    \centering
    \includegraphics[width=\linewidth]{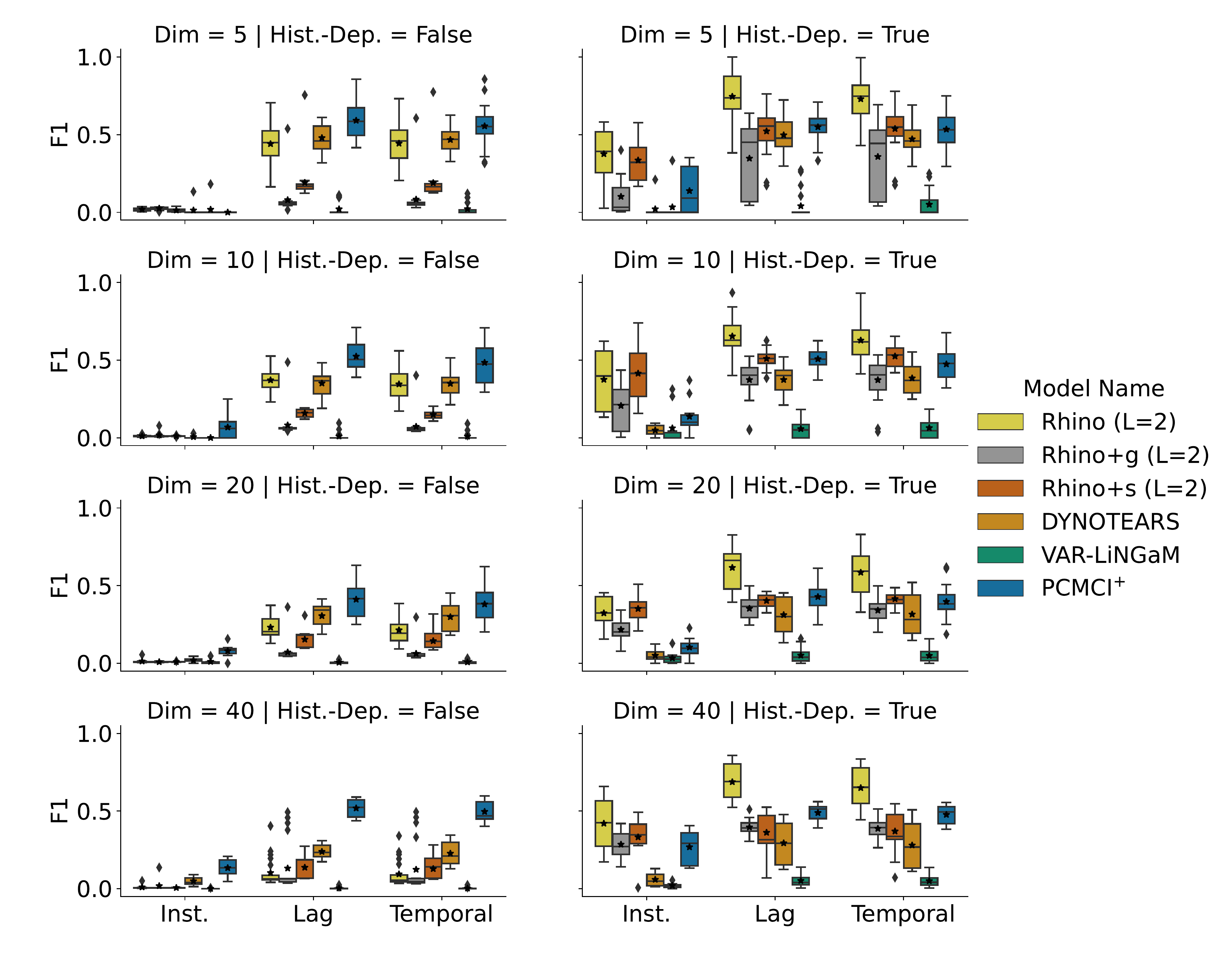}
    \caption{Comparison of the F$_1$ score of the different baseline methods as well as \ModelName{} (light yellow) depending on the dimensionality and whether the data is generated with history-depence or not. Inst. refers to the performance on the instantaneous adjacency matrix, Lag refers to the lagged adjancency matrices and temporal considers the full temporal matrix.}
    \label{fig:synth_histdep}
\end{figure}

\paragraph{Ablation: history dependency} \Cref{fig:synth_histdep} explores the performance difference of all methods on data generated with/without history-dependent noise. Interestingly, most methods perform better on the history-dependent datasets than the history-independent ones. The possible reasons are (1) the difficulty of the discovery also depends on the randomly sampled functions; (2) the default hyperparameters of all methods are initially chosen to favor the datasets with history-dependent noise and instantaneous effects. 
We find that PCMCI$^{+}$ is the most robust across both settings, followed by \ModelName{} and DYNOTEARS. On the other hand, the two variants of \ModelName{} seems to be less robust. When the \ModelName{} is correctly specified, it achieves the best performance. In summary, \ModelName{} demonstrates reasonable robustness to history-dependency mismatch and achieves the best when correctly specified. 

\begin{figure}[!htb]
    \centering
    \includegraphics[width=\linewidth]{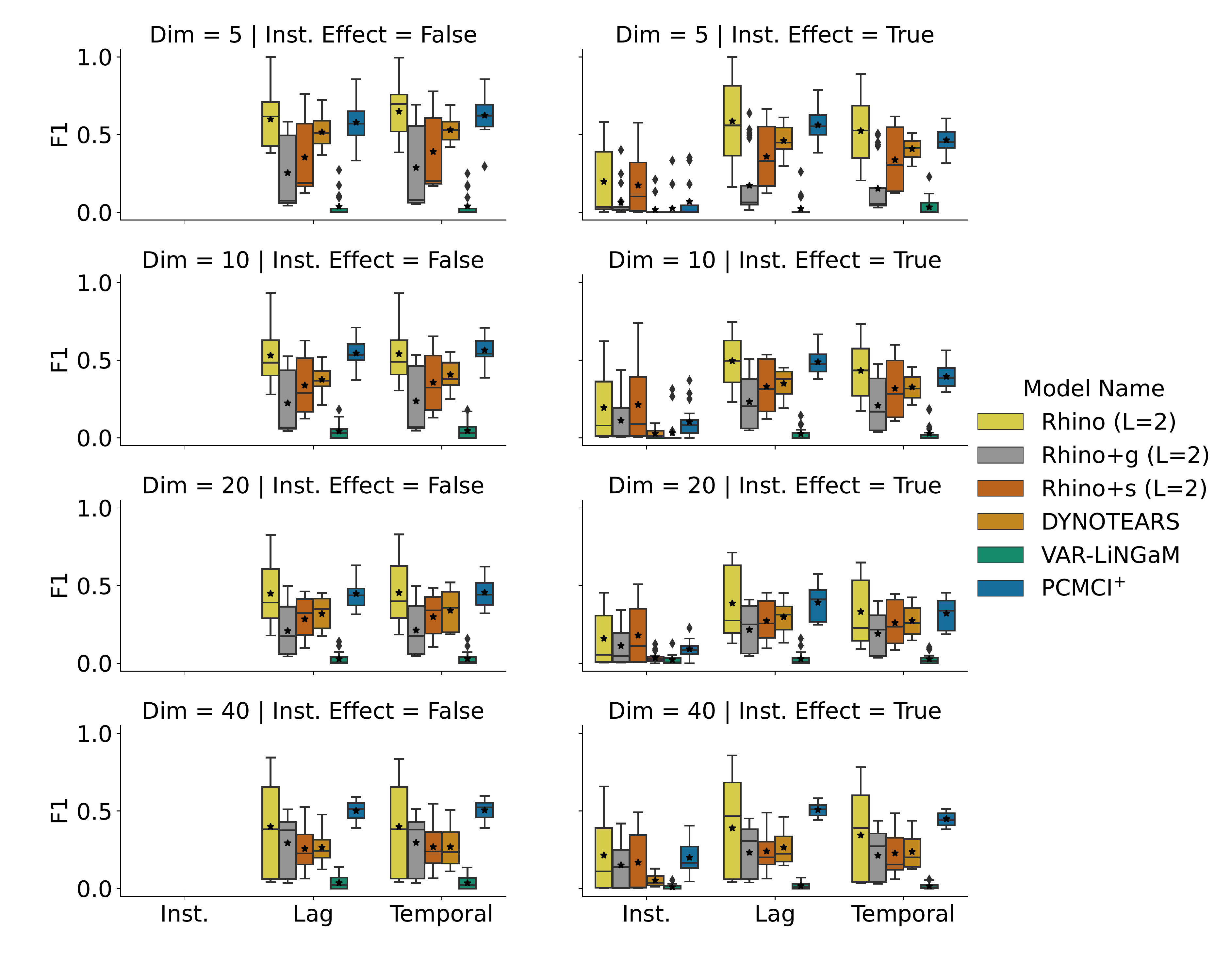}
    \caption{Comparison of the F$_1$ score of the different baseline methods as well as \ModelName{} (light yellow) depending on the dimensionality and whether the data is generated with instantaneous effects or not. Inst. refers to the performance on the instantaneous adjacency matrix, Lag refers to the lagged adjancency matrices and temporal considers the full temporal matrix.}
    \label{fig:synth_inst}
\end{figure}

\paragraph{Ablation: instantaneous effect} We investigate the impact of instantaneous effects in the data. \Cref{fig:synth_inst} shows the F$_1$ score averaged over all possible setting combinations other than instantaneous effect. 
All methods seem to be robust across both settings with PCMCI$^{+}$ and \ModelName{} performing the best. The score of the instantaneous adjacency matrix when instantaneous effects are disabled is not defined and therefore not plotted.

\section{Real-world Experiment Details}
\label{app:Experiment details}

\subsection{DREAM3 Hyperparameter setting}
\label{subsubapp: DREAM3 Hyperparams}
For tuning the hyper-parameters of \ModelName{}, its variants and DYNOTEARS, we split each of the 5 datasets into $80\%/20\%$ training/validation. We tune \ModelName{} and its variants based on the validation likelihoods, and DYNOTEARS based on the validation RMSE error. For PCMCI$^+$, we use the default settings recommended in the Tigramite package (\url{https://github.com/jakobrunge/tigramite}). For other Granger causality baselines, refer to Table 7-11 in \citet{khanna2019economy}. 

\begin{table}[!htb]
\begin{tabular}{lllllll}
\toprule
Hyperparams                   & Node Embedding & Instantaneous eff. & Node Embed. (flow) & lag & $\lambda_s$ & Auglag \\ \midrule
\ModelName{} (Ecoli1) &16                & False                   & 16                   & 2    &  19 & 30         \\ 
\ModelName{} (Ecoli2) &16                & False                   & 100                   & 2    &  25 & 80      \\
\ModelName{} (Yeast1) &32               & False                   & 100                   & 2    &  25  & 10        \\ 
\ModelName{} (Yeast2) &32               & False                   & 100                   & 2    &  25  & 80        \\ 
\ModelName{} (Yeast3) &32                & False                   & 16                   & 2    &  25  & 5        \\

\ModelName{}+g (Ecoli1) &100                & False                   & N/A                   & 2    &  15 & 60         \\ 
\ModelName{}+g (Ecoli2) &100                & False                   & N/A                    & 2    &  25 & 25      \\
\ModelName{}+g (Yeast1) &100               & False                   & N/A                    & 2    &  15  & 5        \\ 
\ModelName{}+g (Yeast2) &100               & False                   & N/A                    & 2    &  19  & 125       \\ 
\ModelName{}+g (Yeast3) &100                & False                   & N/A                    & 2    &  9  & 10        \\

\bottomrule
\end{tabular}
\caption{The hyperparameter setup for \ModelName{}. \texttt{Node embedding} is the dimensionality of $\vu_{\tau,i}$ below \cref{eq: model design of AR-DECI}; \texttt{Instantaneous eff.} specifies whether it models the instantaneous effect or not; \texttt{Node Embed. (flow)} represents the dimensionality of the node embedding for the hyper-network used for conditional spline flow $g_i$ since the hyper-network shares the similar structure as \cref{eq: model design of AR-DECI}; \texttt{lag} defines the model lag order; and $\lambda_s$ is the sparseness penalty in the prior (\cref{eq: AR-DECI Graph Prior}); \texttt{Auglag} is the number of augmented Lagrangian steps, each step consists of 2000 training iterations.}
\label{tab: DREAM3 Rhino hyperparams}
\end{table}

Other than the hyper-parameters reported in \cref{tab: DREAM3 Rhino hyperparams}, we use 1-layer MLPs with 10 hidden units for both $\ell_{\tau,j}, \zeta_{i}$ in \cref{eq: model design of AR-DECI} and the hyper-network for conditional spline flow (8 bins). All the MLPs use residual connections and layer-norm at every hidden layer. We use linear conditional spline flow \citep{dolatabadi2020invertible} instead of the original quadratic version \citep{durkan2019neural} for better training stability. We also initialise the Bernoulli probability $\vardist$ to favour dense graphs (i.e. edge probability $>$ 0.5). For prior $p(\mG)$, we set the initial value $\rho=1$ and $\alpha=0$. For the gradient estimator, we use the Gumbel softmax
method with a hard forward pass and a soft backward pass with temperature of 0.25. We use batch size 64, learning rate 0.001 with Adam optimizer \citep{kingma2014adam}. The training procedure follows from Appendix B.1 in \citet{geffner2022deep}. 

\begin{table}[!htb]
\centering
\begin{tabular}{llll}
\toprule
Hyperparams & lag & $\lambda_a$ & $\lambda_w$ \\ \midrule
Ecoli1      & 2   & 0.01      & 0.5       \\
Ecoli2      & 2   & 0.1       & 0.01      \\
Yeast1      & 2   & 0.005     & 0.1       \\
Yeast2      & 3   & 0.01      & 0.01      \\
Yeast3      & 2   & 0.01      & 0.005     \\ \bottomrule
\end{tabular}
\caption{The hyperparameter setup for DYNOTEARS.}
\label{tab: DREAM3 hyperparams dynotears}
\end{table}
\cref{tab: DREAM3 hyperparams dynotears} contains the hyper-parameters setup for DYNOTEARS. We set the maximum training iterations to be 1000 with DAGness tolerance $10^{-8}$. The threshold value for the weighted adjacency matrix is $0.05$. For PCMCI$^+$, the maximum lag is set to 2. The conditional independence test is set to \texttt{parcorr}, which is based on linear ordinary least square (OLS). A more powerful choice can be a nonlinear independence test based on GP, called \texttt{GPDC}. However, PCMCI$^+$ with $GPDC$ is too slow to finish the training. 

\subsection{Post-processing temporal adjacency matrix}
\label{subsubapp: DREAM3 adj matrix aggregation}
The ground truth graphs for DREAM3 and Netsim datasets are summary graph, which is essentially the temporal graph aggregated over time. We provide a formal definition of summary graph:
\begin{definition}[Causal summary graph
\label{def: summary graph}
\citep{assaad2022survey}]
Let $\mX_t$ be a multivariate temporal process, and $\mG=(\mV,\mE)$ be a summary graph. The edge $p\rightarrow q$ exists if and only if there exists some time $t$ and some lag $\tau$ such that $\mX_{t-\tau}^p$ causes $\mX_{t}^q$ at time $t$ with a lag $0\leq i$ for $p\neq q$ and with a time lag of $0<i$ for $p=q$. 
\end{definition}

Unlike the some of the Granger causality baselines, \ModelName{} (and its variants), DYNOTEARS, VARLiNGaM produces the temporal adjacency matrix after training. For DREAM3 and Netsim datasets, this creates the incompatibility during evaluation. Thus, we need to aggregate the temporal graph into a summary graph before comparing to the ground truth. For binary adjacency matrix, we sum over the time steps followed by a step function, i.e.~$\text{step}(\sum_{k}\mG_k)$. Thus, there will be an edge $i\rightarrow j$ in summary graph as long as there is a connection from $i$ to $j$ at any timestamp. For the Bernoulli probability matrix from \ModelName{} and its variants, we take a $\max(\cdot)$ over the timestamp to generate the probability matrix for the summary graph. 

An exception is PCMCI$^+$, which can only produce MECs for the instantaneous adjacency matrix. In such case, we will enumerate up to 10000 possible instantaneous DAGs from the MECs. Together with the lagged adjacency matrix, we will perform the above post-processing step to generate the corresponding aggregated adjacency matrix. We also estimate the corresponding edge probabilities by taking the average over all possible DAGs. 

For DREAM3 experiments, we ignore the self-connections by setting the diagonal of the aggregated adjacency matrix to be 0.

For Netsim, self-connections are not ignored, following the same settings as \citet{khanna2019economy}.

\subsection{Additional DREAM3 Results}
\label{subsubapp: additional DREAM3 ROC}
Here, \cref{fig:DREAM3 Roc curve plot} shows the additional ROC curve plots for all 5 datasets in DREAM3. For the visualization purpose, we only select a single run for \ModelName{} and this will not affect the curve much due to small standard error in \cref{table: Exp DREAM3 AUROC}.
\begin{figure}
    \centering
    \includegraphics[scale=0.36]{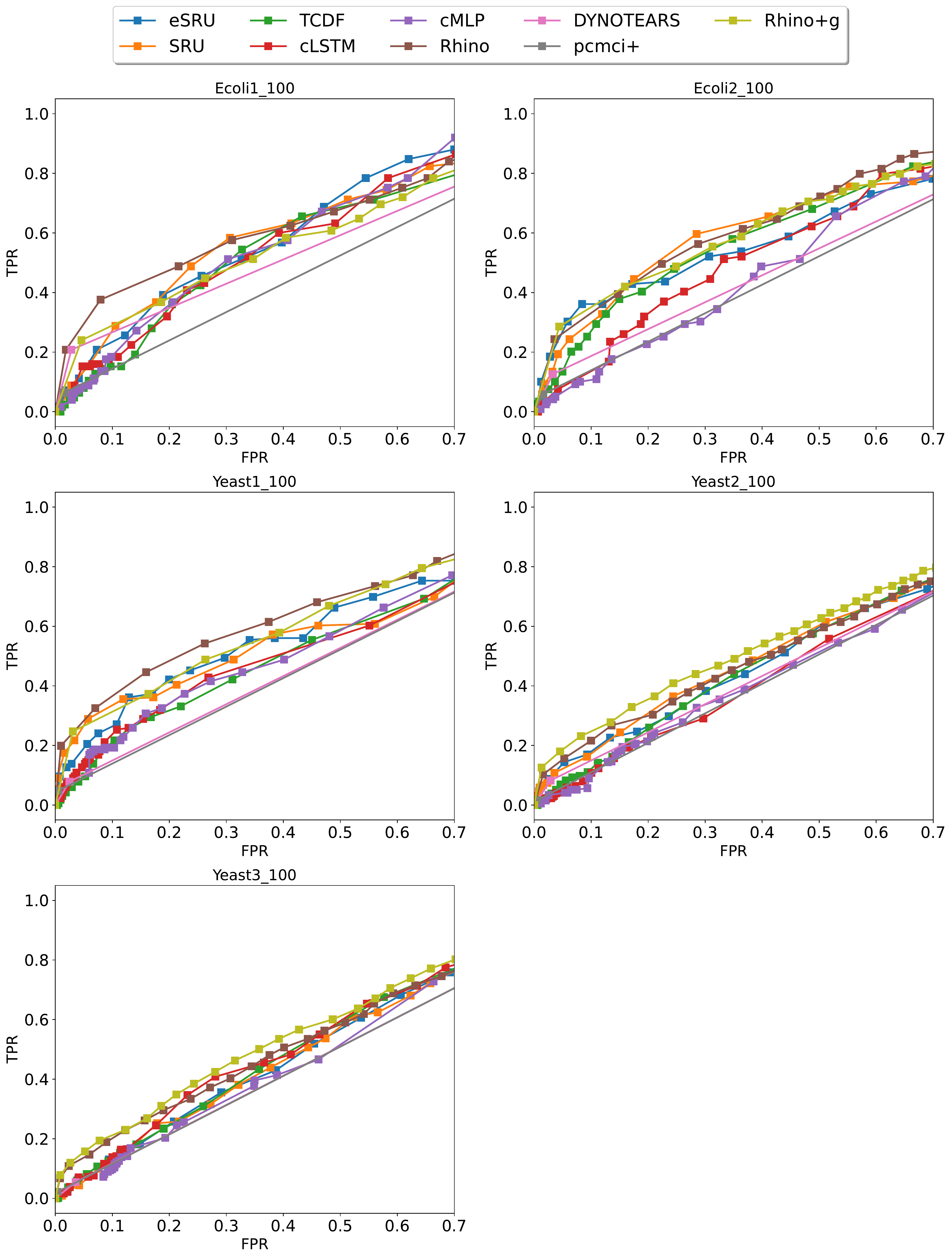}
    \caption{The ROC curve plots of \ModelName{} and other baselines for DREAM3 datasets. For illustration purpose, we only select a single run of \ModelName{}, \ModelName{}+g, DYNOTEARS and PCMCI$^+$ to plot ROC curve. Since the standard error reported in \cref{table: Exp DREAM3 AUROC} is relatively small, the plot should not vary much for other runs. The ROC curve of other baselines are directly taken from figure 7 in \citet{khanna2019economy}. }
    \label{fig:DREAM3 Roc curve plot}
\end{figure}

\subsection{Netsim Hyperparameter setting}
\label{subsubapp: Netsim hyperparams}
For the Netsim experiment, we extract subject 2-6 in \emph{Sim-3.mat} to form the training data and use subject 7-8 as validation dataset. Following the same settings as DREAM3 (\cref{subsubapp: DREAM3 Hyperparams}), we tune the hyperparameters of \ModelName{} and its variants based on the validation log likelihood; DYNOTEARS with MSE on validation dataset; and use default settings of PCMCI$^+$ from Tigramite package. 

It is worth noting that unlike DREAM3 experiment, where the results and hyperparameters of Granger causality baselines can be directly taken from \citet{khanna2019economy}. Their setup of Netsim experiment is different from ours, where they train the baselines using a \textbf{single subject} and compute the corresponding AUROC, followed by averaging over subjects 2-6. Our setup is to train all methods using the entire data from subject 2-6 before computing AUROC. Thus, the hyperparameters for Granger causality are slightly different, and the AUROC increases for the baselines compared to those reported in \citet{khanna2019economy}. 

\paragraph{\ModelName{}} The hyperparameters are the same as DREAM3, except for the following: we initialise the Bernoulli probability of $\vardist$ to have no preference (i.e.~edge probability$=0.5$); the $\lambda_s=25$; we use 2 layer MLPs with 64 hidden units for both functional model (\cref{eq: model design of AR-DECI}) and hyper-network with embedding size 15; the augmented Lagrangian step is 5. For \ModelName{} variants, we use the above settings as well.

\paragraph{DYNOTEARS, PCMCI$^+$ and VARLiNGaM} For DYNOTEARS, we set lag to be 2, $\lambda_a=0.5$ and $\lambda_w=0.5$. For PCMCI$^+$, we use \texttt{parcorr} independence test with lag 3. For VARLiNGaM, we use lag 2 with default settings as \url{https://lingam.readthedocs.io/en/latest/}. 

\paragraph{Granger Causality} For computing AUROC, we follow the same method as \citet{khanna2019economy, tank2018neural} by sweeping through a range of hyperparameters. Specifically, we use the same hyperparameters for SRU and eSRU as \citep{khanna2019economy}. For cMLP, we choose the ridge penalty as $0.43$ and sweep through the group sparse penalty in range $[0.1,1]$. For cLSTM, we set the ridge penalty to be 0.045, and sweep the group sparse penalty in range $[0.1,1]$.For TCDF, we sweep through the threshold in range $[-1,2]$ for the attention scores. Other than the above hyperparameters, everything else follows the setup as in \citet{khanna2019economy}.

\subsection{Additional Netsim Results}
\label{subsubapp: additional Netsim ROC}
\Cref{fig:Netsim ROC plot} shows the ROC curve plot for \ModelName{} and other baselines. It is clear that \ModelName{} achieves significantly better TPR-FPR trade-offs compared to others. 

\begin{figure}
    \centering
    \includegraphics[scale=0.4]{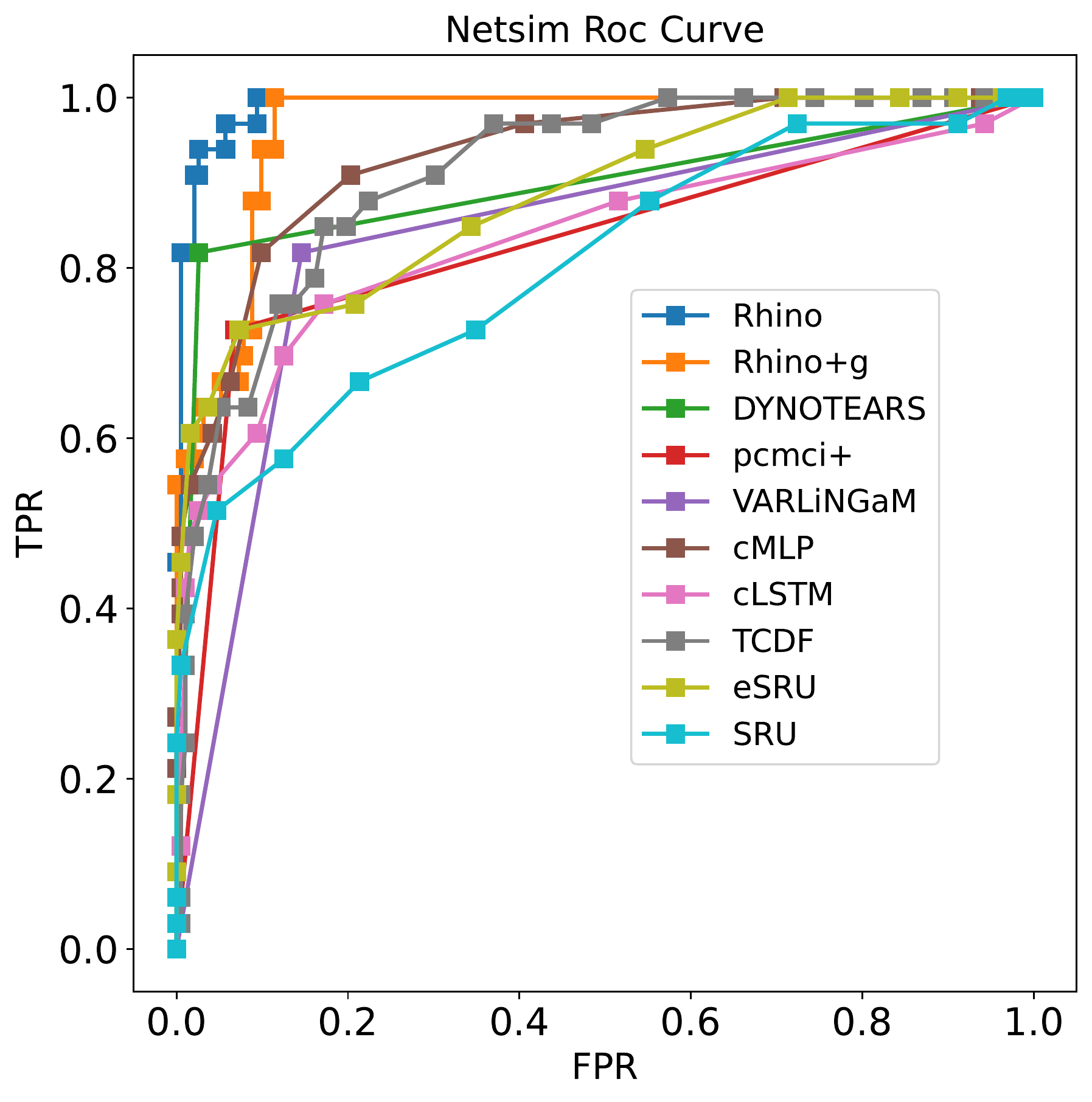}
    \caption{The ROC curve plots of \ModelName{} and other baselines for Netsim dataset. Similar to \cref{fig:DREAM3 Roc curve plot}, we only select 1 run out of 5 for \ModelName{}, \ModelName{}+g, DYNOTEARS, PCMCI$^+$ for illustration purpose. }
    \label{fig:Netsim ROC plot}
\end{figure}

\section{AUROC Metric}
\label{app: netsim AUROC metric}
AUROC metric is a one of the standard metrics for evaluating the causal discovery, which measures the trade-off between the \emph{true positive rate} (TPR) and \emph{false positive rate} (FPR). However, during the experiments, we found out that AUROC does not necessarily correlate well with other discovery metrics. From \cref{fig:Netsim AUROC F1 valid curve}, it is clear that the F$_1$ score continues to increase whereas AUROC and validation likelihood starts to decrease after few steps. Since the dataset of Netsim is relatively small, this indicates the possible overfitting. This disagreement originates from the different aspects these metrics care about. For AUROC, it cares about the trade-off between TPR and FPR with various decision thresholds, and it penalizes the wrong decisions with certainty harshly. On the other hand, F$_1$ score cares about the final inferred binary adjacency matrix with a fixed decision threshold. For example, if we multiply the Bernoulli probability matrix by a small factor (e.g.~$10^{-5}$), the AUROC score will remain the same but the F$_1$ score will tends to 0 with the  default decision threshold $0.5$.

Thus, model overfitting tends to drive the edge probabilities towards $1$ or $0$, which may help the F$_1$ score but these extreme decisions can result in a large decrease in the AUROC score. Thus, for small dataset, we believe AUROC is a better metric than F$_1$, which also agrees with validation likelihood.

In addition, the Bayesian setup of \ModelName{} may also help with better AUROC for small dataset. From the same figure, even the large decrease of validation likelihood suggests potential model overfitting, the AUROC still maintains a reasonable value. This may be due to the Bayesian view of the causal graph, where the posterior edge probability does not converge to extreme values.
\begin{figure}
    \centering
    \includegraphics[scale=0.4]{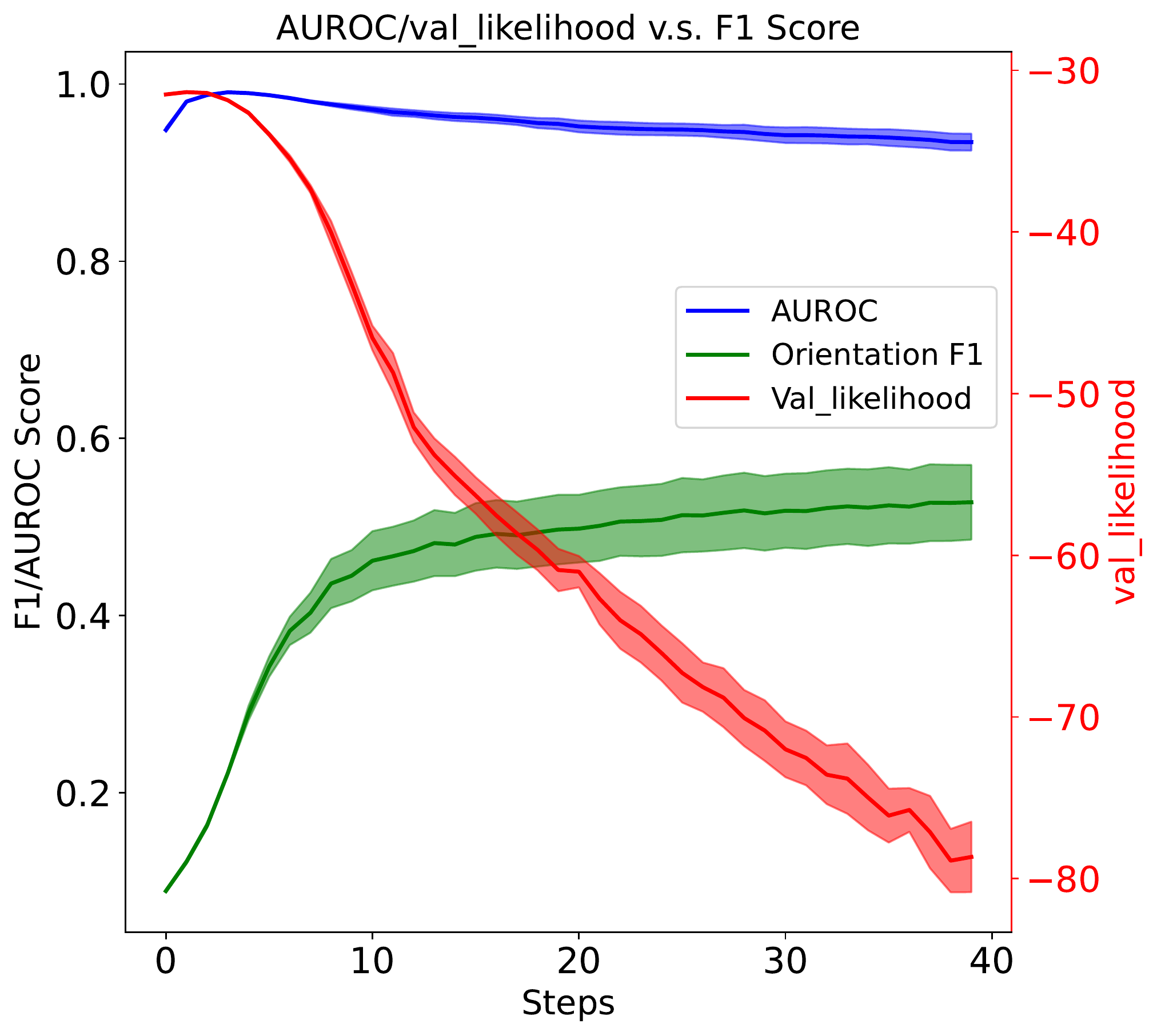}
    \caption{The curves of orientation F$_1$, AUROC and validation likelihood during training. Each curve is obtained by averaging over $5$ random seeds. The validation curve agrees well with the AUROC curve, but shows an opposite trends as F$_1$ curve. This potentially indicates model overfitting in the later stage of training.}
    \label{fig:Netsim AUROC F1 valid curve}
\end{figure}

\end{document}